\newcommand{\one}{\mathbf{1}}
\newcommand{\zero}{\mathbf{0}}
\newcommand{\Eye}{\mathbf{I}}
\newcommand{\B}{\mathbf{B}}
\newcommand{\U}{\mathbf{U}}
\newcommand{\V}{\mathbf{V}}
\newcommand{\W}{\mathbf{W}}
\newcommand{\D}{\mathbf{D}}
\newcommand{\Hm}{\mathbf{H}}
\newcommand{\M}{\mathbf{M}}
\newcommand{\N}{\mathbf{N}}
\newcommand{\PC}{\mathbf{P}}
\newcommand{\R}{\mathbf{R}}
\newcommand{\Rbb}{\mathbb{R}}
\newcommand{\s}{\mathbf{s}}
\newcommand{\g}{\mathbf{g}}
\newcommand{\p}{\mathbf{p}}
\newcommand{\n}{\mathbf{n}}
\newcommand{\barrel}{\text{barr}}
\newcommand{\base}{\text{base}}
\newcommand{\eaxis}{\mathbf{e}}
\newcommand{\ecenter}{\mathbf{c}}
\newcommand{\sketch}{\mathbf{S}}
\newcommand{\profile}{\tilde{\mathbf{S}}}
\newcommand{\gc}{E}
\newcommand{\loss}{\mathcal{L}}
\newcommand{\weights}{\bm{\phi}}
\newcommand{\Weights}{\bm{\Phi}}
\newcommand{\network}{\mathcal{G}}
\newcommand{\pars}{\bm{\theta}}
\newcommand{\parsIGR}{\bm{\beta}}
\newtheorem{thm}{Theorem}
\newtheorem{dfn}{Definition}
\newcommand{\parallelsum}{\mathbin{\!/\mkern-5mu/\!}}
\DeclareMathOperator*{\argmax}{arg\,max}
\DeclareMathOperator*{\argmin}{arg\,min}
\newcommand{\methodname}{{{Point2Cyl}}\xspace}
\newcommand{\primname}{{{Extrusion Cylinder}}\xspace}
\newcommand{\primnames}{{{Extrusion Cylinders}}\xspace}
\crefname{section}{Sec.}{Secs.}
\Crefname{section}{Section}{Sections}
\Crefname{table}{Table}{Tables}
\crefname{table}{Tab.}{Tabs.}
\Crefname{assumption}{\textbf{H}\hspace{-3pt}}{\textbf{H}\hspace{-3pt}}
\crefname{algorithm}{\text{Alg.}}{\text{Alg.}}
\crefname{assumption}{\textbf{H}}{\textbf{H}}
\crefname{equation}{\text{Eq}}{\text{Eq}}
\crefname{definition}{\text{Dfn.}}{\text{Dfn.}}
\crefname{lemma}{\text{Lemma}}{\text{Lemma}}
\crefname{dfn}{\text{Dfn.}}{\text{Dfn.}}
\crefname{thm}{\text{Thm.}}{\text{Thm.}}
\crefname{tab}{\text{Tab.}}{\text{Tab.}}
\crefname{fig}{\text{Fig.}}{\text{Fig.}}
\crefname{table}{\text{Tab.}}{\text{Tab.}}
\crefname{figure}{\text{Fig.}}{\text{Fig.}}
\crefname{section}{\text{Sec.}}{\text{Sec.}}
\definecolor{mkcolor}{RGB}{255,0,128}
\definecolor{yycolor}{RGB}{0,128,128}
\definecolor{mhcolor}{RGB}{0,128,0}
\definecolor{tolgacolor}{RGB}{128, 0, 0}
\definecolor{joecolor}{RGB}{128,0,128}
\definecolor{leocolor}{RGB}{0,0,255}
\definecolor{purvicolor}{RGB}{128,0,255}
\definecolor{purvi_edit_color}{RGB}{255,0,128}
\renewcommand{\paragraph}[1]{{\vspace{1mm}\noindent \bf #1}.}
\newenvironment{mcases}[1][l]
 {\let\@ifnextchar\new@ifnextchar
  \left\lbrace
  \array{@{}l@{\quad}#1@{}}}
 {\endarray\right.}
\newcommand{\myqed}{\rlap{$\quad\qquad \Box$}}
\newcommand{\insertimageC}[5]{ 
\begin{figure}[#5]
\centering
\includegraphics[width=#1\linewidth, clip=true]{figures/#2}
\vspace{-0.4cm}
\caption{#3}
\label{#4}
\end{figure}
}
\title{\methodname: Reverse Engineering 3D Objects\\from Point Clouds to \primnames \vspace{-0.2cm}\\}
\author{Mikaela Angelina Uy$^{*1}$~~~Yen-Yu Chang$^{*1}$~~~Minhyuk Sung$^2$~~~Purvi Goel$^1$
\vspace{0.1cm}\\
Joseph Lambourne$^3$~~~Tolga Birdal$^1$~~~Leonidas Guibas$^{1}$
\vspace{0.2cm}\\
$^1$Stanford University~~~$^2$KAIST~~~$^3$Autodesk Research
\vspace{0.3cm}\\
}
\date{June 2022}
\begin{document}

\twocolumn[{
\renewcommand\twocolumn[1][]{#1}%
\maketitle
\vspace{-0.485in}
\begin{center}
    \centering
    \includegraphics[width=\textwidth]{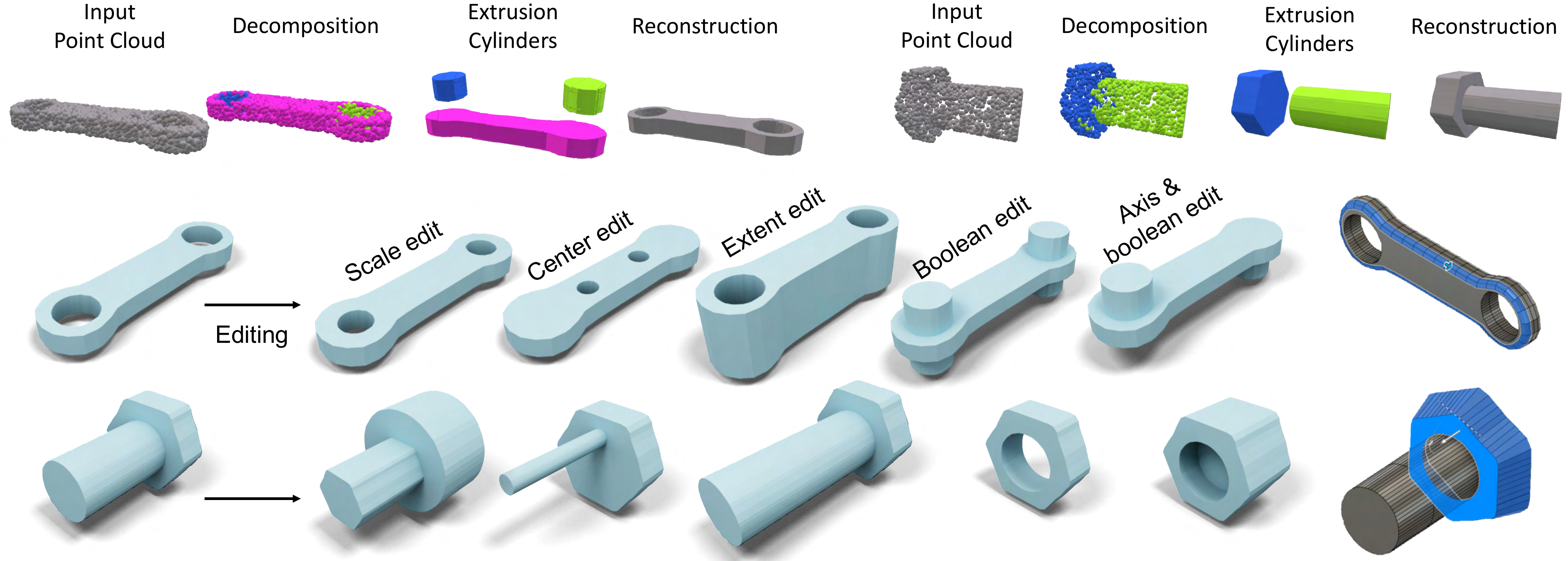}
    \captionof{figure}{\textbf{Overview}.  
    \textbf{\methodname} takes a raw point cloud as input and decomposes it into extrusion cylinders while predicting all parameters including the extrusion axis, extent, and the 2D sketch (first row). The output set of extrusion cylinders can be loaded in CAD software and is editable in various ways thus creating a wide array of variations (second and third rows).
    }
    \label{fig:teaser}
\end{center}%
}]
\maketitle

\begin{abstract}
\vspace{-4\baselineskip}
We propose \textbf{Point2Cyl}, a supervised network transforming a raw 3D \textbf{point} cloud \textbf{to} a set of extrusion \textbf{cylinders}. Reverse engineering from a raw geometry to a CAD model is an essential task to enable manipulation of the 3D data in shape editing software and thus expand their usages in many downstream applications. Particularly, the form of CAD models having a sequence of \emph{extrusion cylinders} --- a 2D sketch plus an extrusion axis and range --- and their boolean combinations is not only widely used in the CAD community/software but also has great expressivity of shapes, compared to having limited types of primitives (e.g., planes, spheres, and cylinders). 
In this work, we introduce a neural network that solves the extrusion cylinder decomposition problem in a geometry-grounded way by first learning underlying geometric proxies. Precisely, our approach first predicts per-point segmentation, base/barrel labels and normals, then estimates for the underlying extrusion parameters in differentiable and closed-form formulations. Our experiments show that our approach demonstrates the best performance on two recent CAD datasets, Fusion Gallery and DeepCAD, and we further showcase our approach on reverse engineering and editing. 
\end{abstract}
\vspace{-1cm}
\section{Introduction}\label{sec:intro}
\footnotetext{$*$ denotes equal contribution}
Our everyday environments are filled with objects fabricated following a carefully engineered computer-aided design.
This makes \textit{reverse engineering in the wild} a vital workflow in situations where copies or variations of a physical object are required, but the corresponding CAD model is not available~\cite{VARADY1997255}. This situation often occurs when repairing machinery or digitizing objects manufactured in the pre-digital era~\cite{reverse_engineering_modeling_methods}. 
To this end, an object is first scanned using a 3D sensor producing a point cloud and later decomposed into a set of consistent primitives or surfaces which could be parsed by existing shape modelers such as Fusion360~\cite{fusion360} or SolidWorks~\cite{solidworks2005solidworks}. 
However, at the user level, a CAD model is designed as a sequence of operations, where the designer first draws a planar 2D \emph{sketch} as a closed curve and later \emph{extrudes} it into a 3D solid~\cite{wu2021deepcad} (\cf~\cref{fig:sketchextrude}). These \emph{\primnames} can then be combined through \emph{boolean} operations.

As this modeling paradigm is hard to summarize using traditional primitives such as planes or cylinders, we set off to ask: \emph{how can we reverse engineer point clouds into primitives that are interpretable and usable in the modeling process of CAD designers?}
Traditional approaches answer this question by following a three-step procedure where (i) the point cloud is first converted into a mesh, (ii) subsequently explained by a collection of trimmed parametric surfaces resulting in a watertight solid (a.k.a.~boundary representation or B-Rep)~\cite{segmentation_methods_for_smooth_point_regions},
and (iii) a CAD program, which could generate the input B-Rep is inferred~\cite{zonegraph}. 
Recent trends in fitting primitives to point clouds~\cite{spfn,birdal2019generic,sommer2020primitect} can bypass the initial meshing stage, but they either assume a finite set of fixed primitives, \eg, planes, cylinders,
cones~\cite{spfn,drost2015local,tian2019learning} or output a disjoint set of primitives yet to be stitched~\cite{birdal2018minimalist,superquadrics} and thus cannot allow, for example, convenient \emph{shape editing} or \emph{variations}. Note that geometric primitives have unique parameterizations and hence cannot be handled by general 3D model detection pipelines like~\cite{qi2019deep,misra2021-3detr}. As such, the presented problem is much finer-grained than explaining a scene with a retrieved set of CAD models, as done in~\cite{scan2cad,uy2020deformation}. Finally, both of the problems we address, \emph{CAD model reconstruction from point sets} and \emph{user editing of CAD shapes} are cast as future work by the recent CAD generative model, DeepCAD~\cite{wu2021deepcad}.

To be able to achieve a geometry-grounded and editable reconstruction, within a CAD-grammar, we propose to cast the problem as a decomposition task into \emph{\primname}s. Our novel approach views \emph{\primname} as a parametric \emph{primitive} can jointly represent a set of sketch-extrude operations and hence is suitable for representing CAD models.
We learn to decompose a given raw point cloud into \primname instances. In particular, our neural network learns to predict per-point \emph{extrusion instance segmentation}, \emph{surface normal}, and \emph{base/barrel membership}\footnote{Here, barrel means the cylinder side, the surface swept by the sketch.}. 
Given this decomposition, we show how to solve for each primitive's underlying parameters, including the extrusion \emph{axis}, \emph{center}, \emph{sketch}, sketch \emph{scale}, and extrusion \emph{extent}, all in a geometry-grounded way. 
In return, as shown in~\cref{fig:teaser}, these enable us to reverse engineer the point cloud into an editable 3D CAD model in a format that is directly consumable by existing CAD modelers~\cite{fusion360,opencascade}, allowing for further variation creation, \eg, adding fillets/chamfers, modifying the sketch and varying parameters such as center and extents.



In summary, our contributions are the following:
\begin{enumerate}[leftmargin=\parindent,noitemsep,topsep=0.2em]
    \item We introduce a novel approach that casts the 3D reconstruction task as a \primname decomposition problem, making it well-suited for CAD modeling.
    \item We architect a neural network that decompose an input point cloud into a set of Extrusion Cylinders by learning geometric proxies, which can then used to estimate the extrusion parameters in differentiable, closed-form formulations.
\item We validate our approach quantitatively and qualitatively on two existing CAD datasets, Fusion Gallery~\cite{willis2020fusion} and DeepCAD~\cite{wu2021deepcad}, surpassing baselines, and further showcase its applications on reconstruction and shape editing. 
\end{enumerate}




\noindent Our project page can be found at \href{https://point2cyl.github.io}{point2cyl.github.io}.

\insertimageC{1}{SketchExtrudeFigure.pdf}{Solid model creation as a sequence of sketch-extrude operations. \textbf{a}) Initial sketch. \textbf{b}) Volume extruded from that sketch. \textbf{c}) A second sketch. \textbf{d}) The second sketch is extruded and the boolean union of the two extruded volumes is created. \textbf{e}) A third sketch. \textbf{f}) The final sketch extruded and subtracted from the solid. \vspace{-3mm}}{fig:sketchextrude}{t!}
\section{Related work}


\paragraph{Primitive fitting to point clouds}
In vision literature, the primitive fitting and object decomposition have been investigated for decades with diverse types of primitives.
The simplest forms of primitives are planes, which have attracted significant attention as they are omnipresent in our environments~\cite{borrmann20113d,monszpart2015rapter,sommer2020planes,fang2018planar,czerniawski20186d}.
More general types of primitives were also explored in the decomposition with RANSAC~\cite{Schnabel2007,Tran2015,li2011globfit} and region-growing~\cite{oesau2016planar} approaches.
The local Hough transform of Drost and Ilic \cite{drost2015local} showed how the detection of primitives can be made more efficient by considering the local voting spaces. This idea is quickly extended to handle a wider range of primitives~\cite{sommer2020primitect,birdal2019generic}. Conic sections are also of special interest as they allow for an infinite set of variations~\cite{Andrews2014,birdal2018minimalist,morwald2013geometric,frahm2006ransac}. 

The proliferation of deep learning has steered researchers to data-driven frameworks~\cite{cpfn}. Li \etal~\cite{spfn} is the first to use a supervised learning method for the primitive fitting, and Paschalidou~\etal~\cite{paschalidou2019superquadrics} and Sharma~\etal~\cite{sharma2020parsenet} extended the idea to fit superquadrics and B-spline surfaces to a point cloud, respectively. However, all these methods assume a fixed set of primitives and none of them handle our \primnames in their fittings.


\paragraph{Reverse engineering in CAD world}
In CAD and graphics communities,
reverse engineering is a well studied problem with many traditional algorithms explored over the years \cite{BENIERE20131382, VARADY1997255}.  Benko \etal  \cite{segmentation_methods_for_smooth_point_regions} described a commonly used procedure for segmenting triangle meshes by analysing the pattern which the normal vectors form on the Gaussian sphere.  This allows planes, cylinders, cones and doubly curves surfaces to be identified. Following segmentation, these primitives can be fitted 
using geometric constraints \cite{Benk2002ConstrainedFI}. 

It is only with the recent availability of large B-Rep, sketch and construction sequence datasets \cite{koch2019abc, Ari2020, willis2020fusion, wu2021deepcad} that the machine learning community has become interested in the generation of CAD data  \cite{sharma2020parsenet, wang2020pienet, Li:2020:Sketch2CAD, Ari2020, willis2021engineering, willis2020fusion, Ganin2021ComputerAidedDA, para2021sketchgen, zonegraph, wu2021deepcad, seff2021vitruvion}. 
Wang \etal \cite{wang2020pienet} showed results for identifying keypoints in a point cloud suitable for fitting with parametric curves. Sharma \etal \cite{sharma2020parsenet} decomposed a point cloud into patches suitable for fitting with parametric surfaces. In both cases the difficult task of combining individual curves or surface to build watertight solid models which can be worked with in CAD modelers is not addressed.        
Ganin \etal \cite{Ganin2021ComputerAidedDA} use a transformer model to generate 2D sketches. While their model can be conditioned on 2D image data, this data must be a sequence raster images of the individual curves which build up the model -- a sequential breakdown not available in the reverse engineering setting. 
Tian \etal \cite{tian2019learning} showed how shapes can be interpreted as programs represented using an RNN and voxel based "neural renderer".



Most relevant to us is DeepCAD~\cite{wu2021deepcad}, which unconditionally generated entire 3D CAD models using a transformer based method. While DeepCAD demonstrates excellent results when auto-encoding in the ``program space", it leaves the guidance of the generated shape using point clouds for  ``future applications", which is what we address in this work.
\section{The Extrusion Cylinder}
\label{sec:gencylinder}

Before delving into the methodological specifics, we first define the \primname, a \emph{primitive} that gives us the flexibility of creating any shape from arbitrary closed loops, by composing them through a series of boolean operations, mirroring the CAD design process. The extrusion cylinder, represent solid volumes in contrast to existing surface-based works~\cite{spfn, parsenet}, and is parametric as defined below. Our definitions are illustrated in~\cref{fig:gencylinder}. We further present closed-form and  differentiable formulations to recover the extrusion parameters from points.

\begin{dfn}[Sketch and profile]\label{dfn:sketch}
We consider a non-self intersecting, finite area, closed loop and normalized 2D \textbf{sketch} $\profile=\{p(q(t)) \in \Rbb^2|t\in[0,1], p(q(0))=p(q(1))\}$, for continuous functions $q:[0,1]\rightarrow \Rbb$ and $p:\Rbb\rightarrow \Rbb^2$. The area enclosed by $\profile$ is often called a \textbf{profile}.
\end{dfn}
\begin{dfn}[Sketch plane]
We also define the plane containing $\profile$, parameterized by the center $\ecenter\in\Rbb^3$, and a normal along the axis, $\eaxis\in\mathbb{S}^2$. 
\end{dfn}
Note that the sketch $\profile$ defines a profile on the sketch plane parameterized by $(\ecenter, \eaxis)$ without ambiguity.

\begin{dfn}[Extrusion Cylinder]
In manufacturing, \textbf{extrusion} is the process of pushing the material forward along a fixed cross-sectional profile to a desired height. In our context, we use extrusions to parameterize our primitive, the \text{extrusion cylinder}, by an axis $\eaxis \in \mathbb{S}^2$, a center $\ecenter \in \Rbb^3$ associated to a sketch $\profile$ scaled by $s\in\Rbb$\footnote{Scale $s$ allows normalized sketch $\profile$ to always fit in a unit circle.}. We further introduce the extents $(r^\text{min}, r^\text{max}) \in (\Rbb\times\Rbb)$ defining the extrusion $\gc=(\eaxis,\ecenter,\profile,s,r^\text{min}, r^\text{max})$.
\end{dfn}

Compared to existing primitive fitting~\cite{spfn,parsenet} or CSG~\cite{csgnet,ucsgnet, ren2021csg} works, we do not assume a finite set of primitives, and instead consider the building blocks to be an arbitrary sketch-extrude operation applied on any closed, non-self-intersecting 2D loop. As such, our extrusion cylinder constitutes a typical building block for CAD pipelines as those used in solid modellers. As we work with point clouds inputs, we now define an attribute associated with points along the \emph{surface} or the \emph{boundary} of the extrusion cylinder.

\begin{dfn}[Base \& Barrel]\label{dfn:bb}
We classify points along the surface of an extrusion cylinder as being base or barrel points. Base points are points that lie on the plane at either extents of the extrusion cylinder, while barrel points are points that lie along the "sides" of the extrusion cylinder. Hence, the surface normals of base/barrel points are parallel/perpendicular to the extrusion axis $\eaxis$.

For any point incident to the boundary of an extrusion cylinder $(\p_i\in\Rbb^3)\in \gc$, we represent this attribute by $b_i$. $b_i=0$ for the barrel and $b_i=1$ for bases. With these, it is easy to verify that for an extrusion axis $\eaxis\in\mathbb{S}^2$:
\begin{equation}\label{eq:basebarrelcond}
    \begin{mcases}[cc@{\ }l]
b_i=0 \quad : & \n_i \perp \eaxis & \mapsto & \quad \n_i^\top\eaxis=0 \\
b_i=1 \quad : & \n_i \parallelsum \eaxis & \mapsto & \quad \n_i^\top\eaxis=\pm 1.
\end{mcases}
\end{equation}
where $\n_i\in\mathbb{S}^2$ is the surface normal evaluated at point $\p_i$.
\end{dfn}

\begin{figure}
    \centering
    \includegraphics[width=\linewidth]{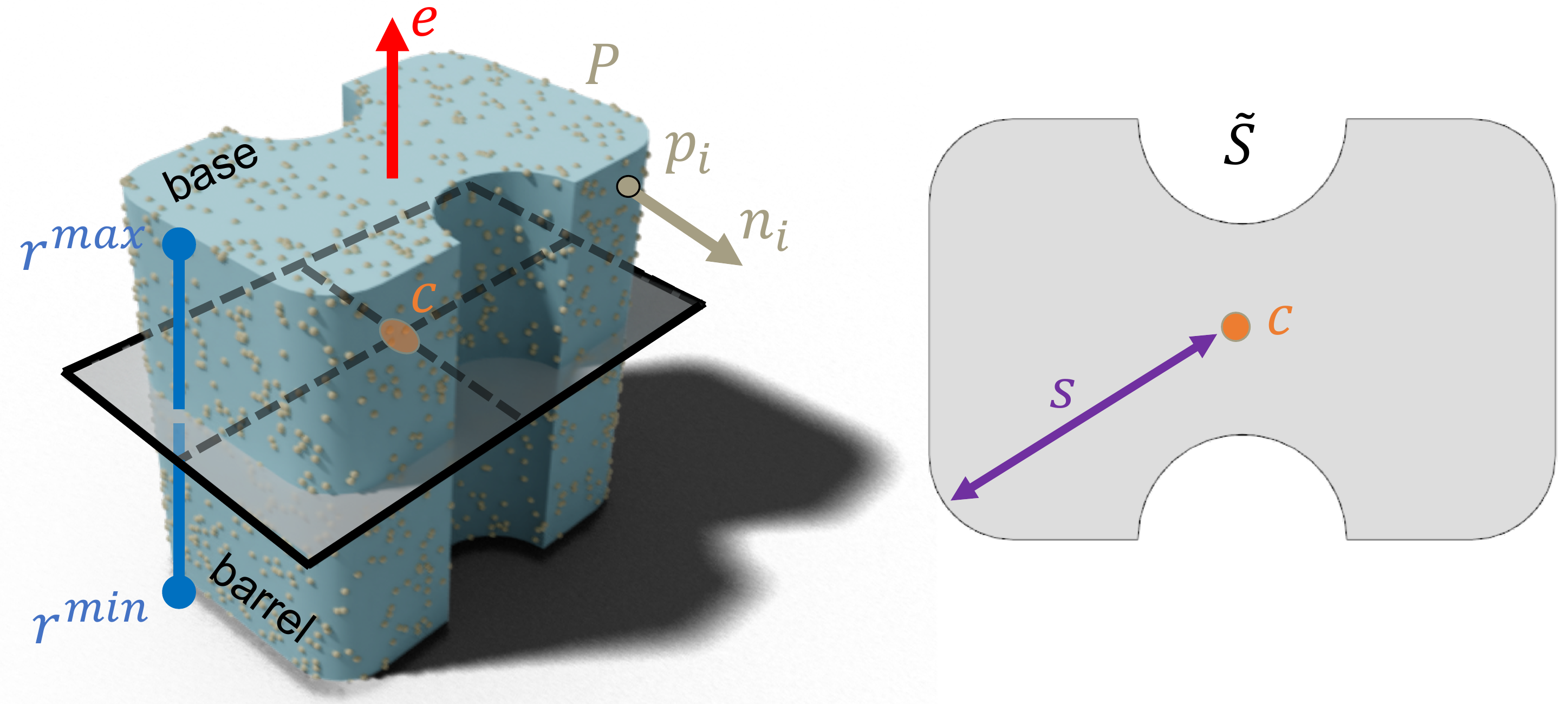}
    \vspace{-0.7cm}
    \caption{An illustration of our \textbf{extrusion cylinder.}}
    \vspace{-0.5cm}
    \label{fig:gencylinder}
\end{figure}

\paragraph{Recovering extrusion cylinders from points.} We now discuss how to recover the parameters of an extrusion cylinder $\gc$ from a set of points $\PC=\{\p_i\in\Rbb^3\}_{i=1}^N$ and corresponding normals $\N=\{\n_i\in\mathbb{S}^2\}_{i=1}^N$ incident to $\gc$. We let $\mathbf{P}_\base, \mathbf{P}_\barrel \subset\mathbf{P}$ denote base and barrel points of $\mathbf{P}$, respectively, where $\mathbf{P}=\mathbf{P}_\base \cup \mathbf{P}_\barrel$. The \textbf{center of the extrusion} ($\hat{\ecenter}$) is the simplest and can be estimated by the taking the mean of all the barrel points of $\PC$. 

The rest of the parameters depend upon the extrusion axis, for which we give the following algorithm:
\begin{thm}[Recovering \textbf{extrusion axis} from points]
For a set of points on an extrusion cylinder, the optimal extrusion axis is given by~\footnote{Throughout the paper, we denote $\widehat{\Box}$ and $\Box$ to be the predicted/estimate and ground truth values, respectively.}:
\begin{equation}\label{eq:thm1}
\hat{\eaxis}=\argmin_{\eaxis, ||\eaxis||=1}\,\eaxis^\top\Hm\eaxis,
\end{equation}
where $\Hm=(\mathbf{N}_\barrel^\top\mathbf{N}_\barrel-\mathbf{N}_\base^\top\mathbf{N}_\base)$, while $\mathbf{N}_\base\in \Rbb^{N_\base\times 3}\subset\N$ and $\mathbf{N}_\barrel\in \Rbb^{N_\barrel\times 3}\subset\N$ denote the corresponding normals for the base/barrel points belonging to the extrusion cylinder. The solution can be obtained by the eigenvector corresponding to the smallest eigenvalue of $\Hm$.
\end{thm}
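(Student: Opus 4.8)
The plan is to convert the geometric base/barrel conditions of \cref{dfn:bb} into a quadratic form on the unit sphere and then recognize the resulting problem as a Rayleigh-quotient minimization. First I would expand the objective. Writing $\mathbf{N}_\barrel$ and $\mathbf{N}_\base$ as stacks of their row normals, the symmetric products are sums of outer products, so that
\begin{equation}
\eaxis^\top\mathbf{N}_\barrel^\top\mathbf{N}_\barrel\,\eaxis=\sum_{i\in\barrel}(\n_i^\top\eaxis)^2,\qquad\eaxis^\top\mathbf{N}_\base^\top\mathbf{N}_\base\,\eaxis=\sum_{i\in\base}(\n_i^\top\eaxis)^2.
\end{equation}
Consequently $\eaxis^\top\Hm\eaxis=\sum_{i\in\barrel}(\n_i^\top\eaxis)^2-\sum_{i\in\base}(\n_i^\top\eaxis)^2$, which is exactly the penalty that is small when $\eaxis$ is orthogonal to every barrel normal and most negative when $\eaxis$ is parallel to every base normal --- i.e.\ the two alignment conditions in \cref{eq:basebarrelcond}.

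Next I would show the true axis globally minimizes this objective. Since $\n_i,\eaxis\in\mathbb{S}^2$, Cauchy--Schwarz gives $0\le(\n_i^\top\eaxis)^2\le 1$; hence the barrel sum is at least $0$ and the base sum at most $N_\base$, so $\eaxis^\top\Hm\eaxis\ge -N_\base$ for all unit $\eaxis$. The relations of \cref{eq:basebarrelcond} make the ideal axis saturate both bounds at once: every barrel term vanishes (barrel normals $\perp\eaxis$) and every base term equals one (base normals $\parallelsum\eaxis$), so the ideal axis attains $-N_\base$ and is therefore a global minimizer.

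Finally I would invoke the spectral theorem: minimizing $\eaxis^\top\Hm\eaxis$ subject to $\|\eaxis\|=1$ for symmetric $\Hm$ is the Rayleigh-quotient problem, whose minimum equals the smallest eigenvalue and is attained at the corresponding eigenvector, establishing \cref{eq:thm1}. The eigenpair can be exhibited directly: base normals satisfy $\n_i\n_i^\top=\eaxis\eaxis^\top$ so that $\mathbf{N}_\base^\top\mathbf{N}_\base=N_\base\,\eaxis\eaxis^\top$, while $\mathbf{N}_\barrel^\top\mathbf{N}_\barrel\,\eaxis=\zero$; together these give $\Hm\eaxis=-N_\base\,\eaxis$, identifying the axis as the eigenvector for eigenvalue $-N_\base$.

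I expect the main obstacle to be the uniqueness half of the statement --- ruling out spurious minimizers. This requires the barrel normals to span the whole plane orthogonal to $\eaxis$, so that $\mathbf{N}_\barrel^\top\mathbf{N}_\barrel$ is strictly positive definite there and the remaining two eigenvalues of $\Hm$ lie strictly above $-N_\base$. This non-degeneracy holds when the sketch is a genuine non-self-intersecting closed loop (\cref{dfn:sketch}) whose boundary tangents realize all planar orientations, but it is the hypothesis one must state explicitly; the rest reduces to the short inequality chain above plus a standard eigenvalue argument.
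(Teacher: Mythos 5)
Your proposal is correct and centers on the same object as the paper---the quadratic form $\eaxis^\top\Hm\eaxis$ over the unit sphere, solved by the smallest-eigenvalue eigenvector---but it argues in the reverse direction and is strictly more complete. The paper's proof is a construction: it rewrites \cref{eq:basebarrelcond} as $\N_\barrel\eaxis=\zero$ and $\N_\base\eaxis=\pm\one$, notes that a proper axis should minimize $\|\N_\barrel\eaxis\|$ while maximizing $\|\N_\base\eaxis\|$, and then simply negates the base objective and adds it to the barrel one to obtain $\Hm$; it never verifies that the global minimizer of this combined objective is the true axis. Your argument supplies exactly that verification: the Cauchy--Schwarz bound $\eaxis^\top\Hm\eaxis\ge-N_\base$, attainment of this bound by the true axis, and the explicit eigenpair $\Hm\eaxis=-N_\base\,\eaxis$ together show the Rayleigh-quotient minimum is achieved at the axis. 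You also make explicit the non-degeneracy hypothesis for uniqueness that the paper leaves silent, though your statement of it can be sharpened: when $N_\base\ge 1$, separation is automatic, because the plane orthogonal to $\eaxis$ is $\Hm$-invariant and there $\Hm$ acts as the positive-semidefinite $\N_\barrel^\top\N_\barrel$, so the remaining two eigenvalues are $\ge 0 > -N_\base$; the spanning condition on the barrel normals is genuinely needed only when the cylinder has no base points (a case the paper acknowledges in its footnote on extent estimation). What the paper's heuristic construction buys is a direct path to the weighted generalization $\Hm_{\Weights}$ of \cref{thm:weightedEAxis}; what your route buys is an actual proof that \cref{eq:thm1} recovers the axis in the noiseless setting.
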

\begin{proof}\renewcommand{\qedsymbol}{}
For an oriented point cloud $(\PC, \N)$, we could write the relations in~\cref{eq:basebarrelcond} as $\mathbf{N}_\barrel \eaxis =\zero$ and $\mathbf{N}_\base \eaxis =\pm\one$, where matrices $\mathbf{N}_\base$ and $\mathbf{N}_\barrel$ correspond to the point normals belonging to base ($\mathbf{P}_\base\subset\PC$) and barrel ($\mathbf{P}_{\barrel}\subset\PC$) points, respectively.
Given that the scalar product for normalized vectors is bounded $[0,1]$, for $\eaxis$ to be a proper extrusion direction, $\|\N_\base \eaxis\|\rightarrow \max$ and $\|\N_\barrel \eaxis\|\rightarrow \min$ must hold, hence:
\begin{align*}
\hat{\eaxis}=
\begin{cases}
\argmax_{\eaxis, \|\eaxis\|=1}\,\eaxis^\top\N_\base^\top\N_\base \eaxis \,\, \text{ for }\,  \N_\base \eaxis=\pm\one\\
\argmin_{\eaxis, \|\eaxis\|=1}\,\eaxis^\top\N_\barrel^\top\N_\barrel \eaxis \,\,\,\, \text{ for }\, \N_\barrel \eaxis=\zero.  
\end{cases}
\end{align*}
Negating the first and combining both objectives, we get: 
\begin{equation*}
\hat{\eaxis}=\argmin_{\eaxis, \|\eaxis\|=1}\,\eaxis^\top(\N_\barrel^\top\N_\barrel-\N_\base^\top\N_\base)\eaxis. \myqed
\end{equation*}\vspace{-7mm}
\end{proof}
For cases, where the data is contaminated with outlier points non-incident to the extrusion cylinder, we can further introduce a per-point weight factor $\weights\in\Rbb^{N}$, controlling the contribution of each of the normals in $\N$.
\begin{thm}[Weighted recovery of \textbf{extrusion axis} from points]\label{thm:weightedEAxis}
For a general weighted point set, the optimal \textbf{extrusion axis} is given by $\hat{\eaxis}=\argmin_{\eaxis, ||\eaxis||=1}(\eaxis^\top \Hm_{\Weights} \eaxis)$, where:
\begin{equation}\label{eq:eigen}
\Hm_{\Weights} = \N^\top\Weights_\barrel^\top\Weights_\barrel\N-\N^\top\Weights_\base^\top\Weights_\base\N.
\end{equation}
where $\Weights_\barrel = \emph{diag}(\weights_\barrel), \Weights_\base= \emph{diag}(\weights_\base) \in\R^{N\times N}$. $\weights_\barrel$/$\weights_\base$ indicate the barrel/base weights assigned to all points, respectively. 
The solution is again given by the eigenvector corresponding to the smallest eigenvalue of $\Hm_{\Weights}$.
\end{thm}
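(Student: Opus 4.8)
The plan is to follow the same logic as the unweighted result, replacing the hard base/barrel partition of the normals by soft, per-point contributions encoded in the diagonal matrices $\Weights_\base$ and $\Weights_\barrel$. First I would note the structural difference: in the proof of the unweighted objective \cref{eq:thm1} the rows of $\N$ were split into two disjoint blocks $\N_\base$ and $\N_\barrel$, whereas here every point contributes to \emph{both} objectives, but with weight $(\weights_\base)_i$ toward the base term and $(\weights_\barrel)_i$ toward the barrel term. Setting these weights to the $0/1$ indicator vectors of the hard partition recovers $\N^\top\Weights_\barrel^\top\Weights_\barrel\N=\N_\barrel^\top\N_\barrel$ and $\N^\top\Weights_\base^\top\Weights_\base\N=\N_\base^\top\N_\base$, so the earlier theorem is a special case and the two statements are consistent.

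Next I would rewrite the two geometric requirements of \cref{eq:basebarrelcond} as weighted least-squares objectives. For the barrel we want $\n_i^\top\eaxis$ close to $0$ at every point, in proportion to its barrel weight, i.e.\ we minimize
\[
\|\Weights_\barrel\N\eaxis\|^2 = \eaxis^\top\N^\top\Weights_\barrel^\top\Weights_\barrel\N\eaxis = \sum_i (\weights_\barrel)_i^2(\n_i^\top\eaxis)^2 .
\]
Symmetrically, for the base we want $|\n_i^\top\eaxis|$ close to $1$, and since each summand is bounded under the unit-norm constraint $\|\eaxis\|=1$, this is equivalent to \emph{maximizing} $\|\Weights_\base\N\eaxis\|^2=\eaxis^\top\N^\top\Weights_\base^\top\Weights_\base\N\eaxis$. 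Negating the base objective and adding it to the barrel objective fuses the two competing goals into the single minimization of $\eaxis^\top\Hm_{\Weights}\eaxis$, with $\Hm_{\Weights}$ exactly as in \cref{eq:eigen}.

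Finally I would argue that the constrained problem $\min_{\|\eaxis\|=1}\eaxis^\top\Hm_{\Weights}\eaxis$ is a Rayleigh-quotient minimization. Because $\Weights_\barrel,\Weights_\base$ are diagonal, each term $\N^\top\Weights^\top\Weights\N=\N^\top\Weights^2\N$ is symmetric positive semidefinite, so $\Hm_{\Weights}$ is symmetric; by the Courant--Fischer/Rayleigh theorem the minimizer over the unit sphere is the unit eigenvector associated with the smallest eigenvalue of $\Hm_{\Weights}$. The step I expect to require the most care is that $\Hm_{\Weights}$ is a \emph{difference} of two positive-semidefinite matrices and hence need not itself be positive semidefinite; it may have negative eigenvalues, which is in fact the desired behaviour since the base term pulls the quadratic form down along the true axis. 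I would therefore stress that the Rayleigh characterization relies only on symmetry, not on positive semidefiniteness, so the smallest-eigenvalue eigenvector remains the correct and well-defined solution.
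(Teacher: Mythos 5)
Your proof is correct and takes essentially the same route as the paper's: both form the weighted quadratic objective (barrel term minimized, base term negated so that it is maximized), rearrange it into $\eaxis^\top\Hm_{\Weights}\eaxis$ with $\Hm_{\Weights}=\N^\top\Weights_\barrel^\top\Weights_\barrel\N-\N^\top\Weights_\base^\top\Weights_\base\N$, and conclude with the eigenvector of the smallest eigenvalue. Your added remarks --- that the hard partition of \cref{eq:thm1} is recovered as the $0/1$ special case, and that the Courant--Fischer/Rayleigh characterization requires only symmetry of $\Hm_{\Weights}$, not positive semidefiniteness --- are sound clarifications of points the paper leaves implicit.
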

\begin{proof}[Proof sketch]\renewcommand{\qedsymbol}{}
$\Hm_{\Weights}$ is a direct modification of $\Hm$ incorporating the weights. Full proof is given in our supplementary.
\end{proof}
\vspace{-0.2cm}
The remainder of the parameters can be estimated as follows. We first define an operator $\Pi(\cdot)$: $\mathbf{q}_i = \Pi(\mathbf{p}_i, \bar{\eaxis}, \bar{\ecenter})$,  takes a point $(\mathbf{p}_i \in \mathbb{R}^3)$ to $(\mathbf{q}_i \in \mathbb{R}^2)$, by aligning $\bar{\eaxis}$ to the z-axis, projecting $\mathbf{p}_i , \ecenter$ onto the $xy-$plane, then centering $\bar{\ecenter}$ at the origin. We define the operator $\mathbf{\Pi}(\cdot)$ that does the same mapping for a 3D point cloud $\mathbf{P}$. Hence, an \emph{unnormalized} sketch can be approximated by a 2D point cloud $\mathbf{\Pi}(\mathbf{P}_\barrel, \hat{\eaxis}, \hat{\ecenter})$, and from here, the \textbf{extrusion scale} $\hat{s}$ can be computed by taking the distance from the farthest point to the origin. This then gives us the \emph{normalized} sketch approximated by $\bar{\sketch} = \hat{s}_i\mathbf{\Pi}(\mathbf{P}_\barrel, \hat{\eaxis}, \hat{\ecenter})$.

The \textbf{extrusion extent} is estimated similarly by calculating the minimum and maximum range of $\mathbf{P}_\barrel$ along $\eaxis$ as illustrated in Fig~\ref{fig:gencylinder}.\footnote{Only barrel points are used to calculate extent as some extrusion cylinders have no base points.}
Altogether: 
\begin{gather}
    \hat{s} = \max_{\p_i\in \PC_\barrel} ||\Pi(\p_i,\hat{\eaxis},\hat{\ecenter}))||_2 \label{eq:gcest}\\
    \hat{r}^{\text{min}} = \min_{\p_i\in \PC_\barrel} (\hat{\eaxis}\cdot(\p_i-\hat{\ecenter})) \quad
    \hat{r}^{\text{max}} = \max_{\p_i\in \PC_\barrel} (\hat{\eaxis}\cdot(\p_i-\hat{\ecenter}))\nonumber
\end{gather}

Since the extrusion axis we estimate is unoriented, we compute the range by taking the maximum of the absolute values from~\cref{eq:gcest}, and extruding that computed extent in both directions, \ie $|\hat{r}_k^{\text{min}}|=|\hat{r}_k^{\text{max}}|, \hat{r}_k^{\text{max}}=-\hat{r}_k^{\text{min}}$.
{Note that, all of these operations lend themselves to differentiation. This will be useful in the following section, where we learn to decompose point clouds into extrusion cylinder primitives.}

\section{Reverse Engineering Extrusion Cylinders}

\paragraph{Problem setting and overview} 
We assume that we observe an input point cloud $\PC\in\Rbb^{N\times 3}$. Our goal is to decompose its underlying geometry into a set of extrusion cylinders $\{ \gc_1, \gc_2, ..., \gc_K \}$.
We propose to solve this sketch-extrude decomposition problem in a geometry-grounded way by first learning the underlying geometric properties as a proxies. 
Specifically, these proxies include (i) an \emph{instance segmentation} $\hat{\W}\in\Rbb^{N\times K}$ as a per-point membership defining the likelihood of assigning each point to a certain segment $k\in\{1\dots K\}$, where each segment is an extrusion cylinder (ii) a \emph{base-barrel segmentation} $\hat{\B}\in [0,1]^{N\times 2}$, instantiating to $\hat{\B}_{:,0}==\mathbf{0}$ for barrel points, $\hat{\B}_{:,1}==\mathbf{1}$ for the base, and (iii) the surface normals ($\hat{\N}\in\Rbb^{N\times 3}$).  

In particular, we model our function approximator $\network_{\pars}:\PC\mapsto (\hat{\M},\hat{\N})$ as a neural network, whose architecture will be precised in~\cref{sec:arch}. 
Each entry in the output $\hat{\M}\in\R^{N\times 2K}$ indicates the likelihood of each point to belong to the base or the barrel of a particular segment. 

\subsection{Inferring \primname Parameters}
Now given predicted geometric proxies $(\hat{\M},\hat{\N})$, we establish a differentiable and closed-form formulation to estimate other extrusion parameters(\cref{sec:gencylinder}). 
$\hat{\M}$ compactly and jointly combines the predicted probability of a point 1) being either a base or a barrel, and 2) belonging to a certain segment. 
We then apply a row-wise softmax turning $\hat{\M}$ into a row-stochastic matrix whose $i^\text{th}$ row indicates the belonging of point $\mathbf{p_i}$ to one of the $2K$-classes, \ie
$\hat{\M}_{i,:}=(\mathbb{P}(\mathbf{p_i}\in \mathbf{P}^0_\barrel),\mathbb{P}(\mathbf{p_i}\in \mathbf{P}^0_\base),\dots,\mathbb{P}(\mathbf{p_i}\in \mathbf{P}^K_\barrel),\mathbb{P}(\mathbf{p_i}\in \mathbf{P}^K_\base))$~\footnote{$\mathbb{P}$ here denotes probability, and superscript denotes the segment index.}.
We can then recover $\hat{\W}$ by summing every two consecutive columns \ie $\hat{\W}_{:,j} = \hat{\M}_{:,2j} + \hat{\M}_{:, 2j+1}\,\forall j$. Similarly, $\hat{\B}$ can be obtained by summing up all odd/even columns of $\hat{\M}$, \ie $\hat{\B}_{:,0} = \sum_i \hat{\M}_{:,2j}$ and $\hat{\B}_{:,1} = \sum_i \hat{\M}_{:, 2j+1}$. 
\vspace{-0.1cm}
\begin{thm}\label{thm:M}
Matrix $\hat{\M}$ cannot be uniquely recovered from $\hat{\W}$ and $\hat{\B}$.
\end{thm}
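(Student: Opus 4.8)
The plan is to show that the linear map $\mathcal{F}$ sending $\hat{\M}$ to the pair $(\hat{\W},\hat{\B})$ defined by the row-summation rules $\hat{\W}_{:,j}=\hat{\M}_{:,2j}+\hat{\M}_{:,2j+1}$ and $\hat{\B}_{:,0}=\sum_j\hat{\M}_{:,2j}$, $\hat{\B}_{:,1}=\sum_j\hat{\M}_{:,2j+1}$ fails to be injective on the set of valid (row-stochastic) matrices whenever $K\geq 2$. Since every row of $\hat{\M}$ is transformed independently, I would reduce the whole argument to a single row $\mathbf{m}\in\Rbb^{2K}$ and exhibit a nonzero perturbation that leaves both $\hat{\W}$ and $\hat{\B}$ unchanged.

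First I would count dimensions per row. The map $\mathcal{F}$ sends $\mathbf{m}\in\Rbb^{2K}$ to the $K$ coordinates of the $\hat{\W}$-row together with the $2$ coordinates of the $\hat{\B}$-row, i.e. $K+2$ linear functionals of $\mathbf{m}$. These functionals are linearly dependent: summing the $K$ entries of the $\hat{\W}$-row and summing the $2$ entries of the $\hat{\B}$-row both equal the total $\sum_\ell \mathbf{m}_\ell$, so the rank of $\mathcal{F}$ is at most $K+1$. Hence its kernel has dimension at least $2K-(K+1)=K-1\geq 1$ for $K\geq 2$, and non-injectivity follows at once.

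To make this constructive, and to confirm the ambiguity survives the probability constraints, I would next write down an explicit kernel direction. Picking any two segments $k\neq k'$, consider the per-row vector $\mathbf{v}$ with $+1$ on the barrel entry of $k$, $-1$ on the base entry of $k$, $-1$ on the barrel entry of $k'$, and $+1$ on the base entry of $k'$, and zero elsewhere. A direct check shows that each affected column of $\hat{\W}$ receives $+1-1=0$ and each of the two columns of $\hat{\B}$ also receives $+1-1=0$, so $\mathbf{v}\in\ker\mathcal{F}$. Then for any $\hat{\M}$ whose entries lie strictly inside $(0,1)$, the matrix $\hat{\M}'=\hat{\M}+\epsilon\,\mathbf{v}$ (applied row-wise) with a sufficiently small $\epsilon\neq 0$ is still nonnegative and row-stochastic, is distinct from $\hat{\M}$, yet produces identical $\hat{\W}$ and $\hat{\B}$ — proving the claim.

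The main obstacle is not the kernel computation, which is routine, but ensuring the second preimage $\hat{\M}'$ remains a \emph{feasible} probability matrix rather than merely an abstract element of the kernel; this is exactly why I restrict to interior $\hat{\M}$ and use a small $\epsilon$. For completeness I would also flag the boundary case: when $K=1$ the map is in fact invertible, since $\hat{\B}$ alone recovers the single pair of entries, so the asserted non-uniqueness is genuinely a $K\geq 2$ phenomenon and its scope should be stated explicitly.
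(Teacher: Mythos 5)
Your proof is correct, and it reaches the paper's conclusion by a more direct route. The paper assumes recoverability, stacks the constraints in matrix form as $\hat{\W}=\hat{\M}\Delta_W$, $\hat{\B}=\hat{\M}\Delta_B$, $\hat{\M}\one=\one$ (with $\Delta_W=\Eye_{K\times K}\otimes[1\;1]^\top$ and $\Delta_B=\one\otimes\Eye_{2\times 2}$), forms a least-squares loss, and sets its gradient to zero; the resulting normal equations have coefficient matrix $\D=\Delta_W\Delta_W^\top+\Delta_B\Delta_B^\top+\one\one^\top$ of rank $K+1<2K$, so the system is underdetermined. Your argument rests on exactly the same rank count --- the $K+2$ functionals per row satisfy the single relation $\sum_j \hat{\W}_{i,j}=\hat{\B}_{i,0}+\hat{\B}_{i,1}$, giving rank at most $K+1$ --- but you apply it to the forward map itself rather than to a least-squares reformulation, which makes the proof shorter and avoids the detour through normal equations. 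Your version also buys three things the paper's does not make explicit: (i) a constructive description of the ambiguity (the kernel vector trading base/barrel mass between two segments $k\neq k'$, which is precisely the semantic ambiguity one would expect); (ii) a verification that the second preimage $\hat{\M}+\epsilon\mathbf{v}$ stays inside the feasible set of nonnegative row-stochastic matrices --- the paper's rank-deficiency argument alone only shows non-uniqueness over all of $\Rbb^{N\times 2K}$, not within the probability simplex where $\hat{\M}$ actually lives; and (iii) the correct scope $K\geq 2$: for $K=1$ one has $K+1=2K$, the map is invertible ($\hat{\B}$ already equals $\hat{\M}$), so the paper's assertion that rank $K+1$ is ``always'' insufficient silently fails in that boundary case, which you rightly flag.
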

\vspace{-0.1cm}
We refer the reader to our supplementary material for the full proof.
\cref{thm:M} guarantees that $\M$ is a rather compact parameterization of the number of unknowns \ie it does not suffice to learn $\W$ and $\B$, individually. Once predicted, we can directly use $\hat{\M}$ in order to solve for the parameters of each extrusion cylinder. To this end, we set:
\begin{align}
    \Weights_\barrel^k = \text{diag}(\hat{\M}_{:,2k})\quad 
    \Weights_\base^k = \text{diag}(\hat{\M}_{:,2k+1}),
\end{align}
$\forall k\in[1.K]$, and use them (along with the surface normals $\hat{\N}$ as the weights for the algorithm presented in~\cref{thm:weightedEAxis} to obtain $\hat{\eaxis}_k$ for each segment $k$\footnote{The weights used here are the probabilities that each point is a base/barrel point of extrusion segment k}. Then, the rest of the parameters are estimated individually for each segment as explained in~\cref{sec:gencylinder}, \eg~\cref{eq:gcest}. 

\paragraph{Inferring sketches} Finally, we show how to predict the sketch representation $\profile$. We first project the barrel points of each segment $k$ onto the plane defined by $(\hat{\ecenter}_k, \hat{\eaxis}_k)$ and scale by $\hat{s}_k$. This results in a 2D point cloud $\hat{\sketch}_k\in\Rbb^{N_\barrel^k\times 2}= s_k\Pi(\hat{\mathbf{P}}^k_\barrel,\hat{\eaxis}_k,\hat{\ecenter}_k)$\footnote{Similarly, we project the 3D normals of each point onto the same sketch plane endowing the 2D sketch with 2D normals.}. This creates a representational discrepancy when compared to~\cref{dfn:sketch} where the loops must be closed. A naïve approach could attempt to summarize the points into 2D primitives \eg line segments and arcs. However, (i) guaranteeing that the output sketch is closed and non-self intersecting is hard, and (ii) there can be multiple approximations of primitives for the same 2D sketch, hampering the learnability. Instead, inspired by the recent deep implicit works~\cite{Park_2019_CVPR,atzmon2020sal}, we represent the sketch implicitly, by learning the parameters $\parsIGR$ of an \emph{encoder function} $f_{\parsIGR}(\hat{\sketch}_k)\in \Rbb^D$ that maps the 2D point cloud into a global, normalized \emph{sketch latent space}. 
This latent code acts as the condition of a \emph{decoder} $\mathcal{S}:(\Rbb^D\times\Rbb^2)\rightarrow\Rbb$ mapping $(\mathbf{r} \in \Rbb^2)$ to its signed distance value to the underlying normalized sketch $\profile_k$: $d(\profile_k,\mathbf{r})\approx\mathcal{S}(f(\hat{\sketch}_k), \mathbf{r})$. Here, $d$ is the distance between $\mathbf{r}$ and ground truth \emph{sketch curve} of the segment $\profile_k$, approximated by implicit function $\mathcal{S}$ that is conditioned on the encoded 2D \emph{point cloud sketch approximation} $f(\hat{\sketch}_k)$. During inference, marching squares can be ran on the resulting field to obtain the closed sketch curves. The training of $f$, $\mathcal{S}$ and $\network_{\pars}$ will be specified below.

\subsection{Training}
\label{sec:train}
Assuming the availability of ground truth (GT) labels, we train the parameters ${\pars}$ of $\network_{\pars}$ using a multi-task, non-convex objective composed of segmentation (seg), base-barrel classification (bb), normal (norm) and sketch regularization losses:
\begin{equation}
    \loss = \loss_{\text{seg}} + \lambda_{\text{bb}}\loss_{\text{bb}} + \lambda_{\text{norm}}\loss_{\text{norm}} + \lambda_{\text{sketch}}\loss_{\text{sketch}},
    \label{full_loss}
\end{equation}
where $\lambda_{\text{bb}}=\lambda_{\text{norm}}=\lambda_{\text{sketch}}=1$. 
In what follows, we define and detail each of the loss terms. 

\paragraph{Normal estimation loss ($\loss_{\text{norm}}$)} We begin by estimating per-point unoriented normals $\hat{\N}^{N\times 3}$ 
for the given input point cloud $\PC$. 
and penalize for absolute cosine distance between the predicted and GT normals:
\vspace{-0.1cm}
\begin{equation}\label{eq:normals}
    \loss_{\text{norm}} = \sum\limits_{i=1}^{N} (1-|\hat{\N}_{i,:}^\top\N_{i,:}|).
\end{equation}
\vspace{-0.1cm}
\paragraph{Extrusion cylinder segmentation loss ($\loss_{\text{seg}}$)}
Points belonging to the same extrusion cylinder segment are assumed to be created by the same sketch-extrusion block. 
Note that, our problem does not admit a unique solution to the ordering of the segments as there could be multiple possible orderings of the sketch-extrude blocks that can yield the same output geometry. We instead predict a set of unordered segments and use Hungarian matching to find the best one-to-one matching with the ground truth extrusion cylinder segments, by computing Relaxed Intersection over Union (RIoU) between the predicted and ground truth segmentation:

\begin{equation}
\text{RIoU}(\U, \V)=\frac{\U^\top\V}{||\U||_1+||\V||_1 - \U^\top\V}.    
\end{equation}
Our final loss then maximizes the mean RIoU between the predicted and ground truth extrusion cylinder segments:
\begin{equation}
    \loss_{\text{seg}} = \frac{1}{N} \sum_{i=1}^{N} (1 - \text{RIoU}(\hat{\W}_{i,:}, \W_{i,:})).
\end{equation}
This also maintains the pipeline to be differentiable almost everywhere, and hence can be trained end-to-end. 
From here on, our notations assume a reordering such that the predicted and ground truth extrusion cylinders are in correspondence.

\paragraph{Base-and-Barrel classification loss ($\loss_{\text{bb}}$)}
Retrieving $\hat{\B}$ from $\hat{\M}$ allows us to further impose a binary cross-entropy loss between all the base-barrel memberships and their corresponding GT labels: 

\begin{equation}
    \loss_{\text{bb}} = \sum_{i=1}^{N} \sum_{j=1}^2 \hat{\B}_{i,j}\log(\B_{i,j}). 
\end{equation}


\paragraph{Sketch consistency loss ($\loss_{\text{sketch}}$)}
To ensure that the predicted parameters can yield meaningful sketches, we introduce the regularizer $\loss_{\text{sketch}}$. We first describe the the training process for $f_{\parsIGR}$ and $\mathcal{S}$ before defining $\loss_{\text{sketch}}$.

Since it is difficult to obtain direct supervision for GT signed distance fields, we propose to learn the implicit curve representation directly from \emph{raw} 2D point clouds. We use ground truth barrel points for each segment ($\mathbf{P}_\barrel^k$) of the models and its extrusion parameters to obtain clean sketches $\mathbf{S}_k\in\Rbb^{N_k\times 2}= s_k\mathbf{\Pi}(\mathbf{P}_\barrel^k,\eaxis_k,\ecenter_k))$ to train $f_{\parsIGR}$ and $\mathcal{S}$. Yet, our sketches can be created by positive or negative operations and we cannot uniquely determine the sign of the surface normals. Thus, we use an \emph{unsigned} variant of IGR~\cite{igr}, to learn $f_{\parsIGR}$ and $\mathcal{S}$. The loss for a sketch point cloud $\{\s_i \in\Rbb^2  \subset \sketch_k\}$ \footnote{Note that here we define the loss per sketch point cloud of an extrusion segment index $k$ denoted by $\sketch_k$, for the \emph{ground truth} projected point cloud and $\hat{\sketch}_k$ for the predicted projected point cloud.} with corresponding normals $\{\g_i\in\mathbb{S}^1\}$ is defined as 
\begin{equation}
    \mathcal{L}_\mathcal{S}  \triangleq \mathcal{L}_\text{manifold} + \lambda_1\mathcal{L}_\text{non-manifold},
\end{equation}
\noindent where
\begin{equation}
\begin{aligned}
    \mathcal{L}_\text{manifold}  &= \frac{1}{|\sketch_k|}\sum_{\s_i\in\sketch_k} (|\mathcal{S}(f_{\parsIGR}(\sketch_k), \s_i)| \\
    &+ \lambda_2\min(||\triangledown\mathcal{S}(f_{\parsIGR}(\sketch_k), \s_i) \pm n_i||) ).
\end{aligned}
\end{equation}
Note, the second term is different from the original IGR~\cite{igr} as it considers \emph{unoriented} normals. Moreover,
\begin{equation}
\begin{aligned}
    \mathcal{L}_\text{non-manifold}  = \mathbb{E}_{\mathbf{r}}(||\triangledown_{\mathbf{r}}\mathcal{S}(f_{\parsIGR}(\sketch_k), \mathbf{r})||-1), 
\end{aligned}
\end{equation}
denotes the Eikonal term and encourages the gradients to have a unit norm everywhere ($\forall \mathbf{r}\in\mathbb{R}^2$). Below assumes that $f_{\parsIGR}$ and $\mathcal{S}$ are trained to convergence, which then allows us to define our sketch regularization loss.

We make the observation that when the decomposition and hence extrusion parameters we predict are close to the GT, the latent embedding of the projected 2D point cloud $\hat{\sketch}_k = \hat{s_k}\mathbf{\Pi}(\hat{\PC}^k_\barrel,\hat{\eaxis}_k,\hat{\ecenter}_k)$ should be close to the GT sketch embedding $f_{\parsIGR}(\sketch_k)$, $\hat{\PC}^k_\barrel$ is obtained using $\hat{M}$. Under the light of this \emph{prior}, we train a joint embedding space $g$ together with $\network_{\pars}$ to match the sketch embedding $f_{\parsIGR}$.
\begin{equation}
    \mathcal{L}_\text{sketch} = ||g(\hat{\sketch}_k)-f_{\parsIGR}(\sketch_k)||_2.
\end{equation}
As $\mathcal{G}_\theta$ and $g$ are jointly optimized, this loss encourages $\mathcal{G}_\theta$ to predict extrusion cylinder decompositions that result in parameters that produce sketches close to GT.

\subsection{Network Details}\label{sec:arch}
We use a Pointnet++~\cite{qi2017pointnet++} backbone to learn a global 3D point cloud feature. This feature vector is then passed through two separate fully connected branches to obtain instance and base/barrel segmentations $\hat{\mathbf{M}}$ as well as normals $\hat{\mathbf{N}}$. Our network, $\network_{\pars}$ is first trained with segmentation ($\loss_{\text{seg}}$), base-barrel classification ($\loss_{\text{bb}}$) and normal losses ($\loss_{\text{norm}}$). 
We also separately pre-train the sketch implicit network $\mathcal{S}$ using as input, the ground truth point clouds projected  and scaled using the ground truth axis/scale for each separate extrusion segment of all the models. We use the same architecture as IGR~\cite{igr} for $\mathcal{S}$, and a PointNet~\cite{qi2017pointnet} encoder for $f_{\parsIGR}$ with latent dimension $D=256$.
We then append both pretrained networks together train our full model (\textbf{Point2Cyl}) using the full loss function $\loss$ with $\mathcal{L_\text{sketch}}$ defined in~\cref{full_loss}. $g$ was trained from scratch with the same architecture as $f_{\parsIGR}$. $f_{\parsIGR}$ and $\mathcal{S}$ are kept fixed during the final training. More details can be found in the supplementary.





\vspace{-0.1cm}
\section{Results}\label{sec:results}
\vspace{-3mm}
\paragraph{Datasets}
We adapt the two recent CAD datasets, Fusion Gallery~\cite{willis2020fusion} and DeepCAD~\cite{wu2021deepcad}, to show the feasibility of our approach.
We first preprocess the data to extract a set of extrusion cylinders with their parameters for each solid model. For each model, we randomly sample 8192 points from each underlying normalized and centered mesh to obtain input 3D point clouds. Ground truth sketches are represented as normalized 2D point clouds from randomly sampling 8192 points from the barrels of each extrusion cylinder. 
We train/test our networks on 4316/1242 models on Fusion and 34910/3087 models on DeepCAD. Details on the preprocessing, extrusion cylinder and sketch extraction are found in our supplementary material.

\begin{table*}[t]
\begin{minipage}[t][5cm]{0.6\textwidth}
    \centering
    \setlength{\tabcolsep}{0.3em}
    \caption{Quantitative results on Fusion Gallery and DeepCAD datasets.}
    \vspace{-0.3cm}
\newcolumntype{Y}{>{\centering\arraybackslash}X}
{\footnotesize
  \begin{tabular}{ll|c|c|c|c|c|c|c}
    \toprule
    &&Seg.$\uparrow$ & Norm.$(\degree)\downarrow$ & B.B.$\uparrow$ & E.A.$(\degree)\downarrow$ & E.C.$\downarrow$ &  Fit Cyl $\downarrow$ & Fit Glob$\downarrow$   \\
    \midrule
    \multirow{5}{*}{\rotatebox[origin=c]{90}{Fusion Gallery}}
    & H.V. + $\mathbf{N_J}$  & 0.409 & 12.264 & 0.595 & 58.868 & 0.1248  & 0.1492 & 0.0683  \\
    & D.P. & - & - & - & 30.147 & 0.1426 & 1.4132 & 0.4257\\
    & w/o $\mathcal{L}_\text{sketch} + \mathbf{N_J}$ & 0.699 &  12.264 & \textbf{0.913} & 14.169 & 0.0729 & 0.0828 & 0.0330\\
    & w/o $\mathcal{L}_\text{sketch}$ & 0.699 & 8.747 & \textbf{0.913} & 9.795 & 0.0727 & 0.0826 & 0.0352\\
    & Ours (\textbf{Point2Cyl}) & \textbf{0.736} &\textbf{ 8.547 }& 0.911 & \textbf{8.137} & \textbf{0.0525} & \textbf{0.0704} & \textbf{0.0305}\\
    \midrule
    \multirow{5}{*}{\rotatebox[origin=c]{90}{DeepCAD}}
    & H.V. + $\mathbf{N_J}$ & 0.540 & 13.573 & 0.577 & 59.785 & 0.0435 & 0.1664 & 0.0459  \\
    & D.P. & - & - & - & 48.818 & 0.0716 & 0.4947 & 0.4840\\
    & w/o $\mathcal{L}_\text{sketch} + \mathbf{N_J}$ & 0.829 & 13.573 & 0.916 & 10.109 & 0.0275 & 0.0856 & 0.0312\\
    & w/o $\mathcal{L}_\text{sketch}$ & 0.829 & 8.850 & 0.916 & 8.085 & 0.0273  & 0.0783 & 0.0324\\
    & Ours (\textbf{Point2Cyl}) & \textbf{0.833} & \textbf{8.563} & \textbf{0.919} & \textbf{7.923} & \textbf{0.0267} & \textbf{0.0758} & \textbf{0.0308}\\
    \bottomrule
  \end{tabular}
  }
\vspace{-0.5\baselineskip}
\label{tab:quantitative}
\end{minipage}\hfill
	\begin{minipage}[t]{0.37\textwidth}
	    \vspace{-1mm}
		\centering
		\includegraphics[width=0.8\linewidth]{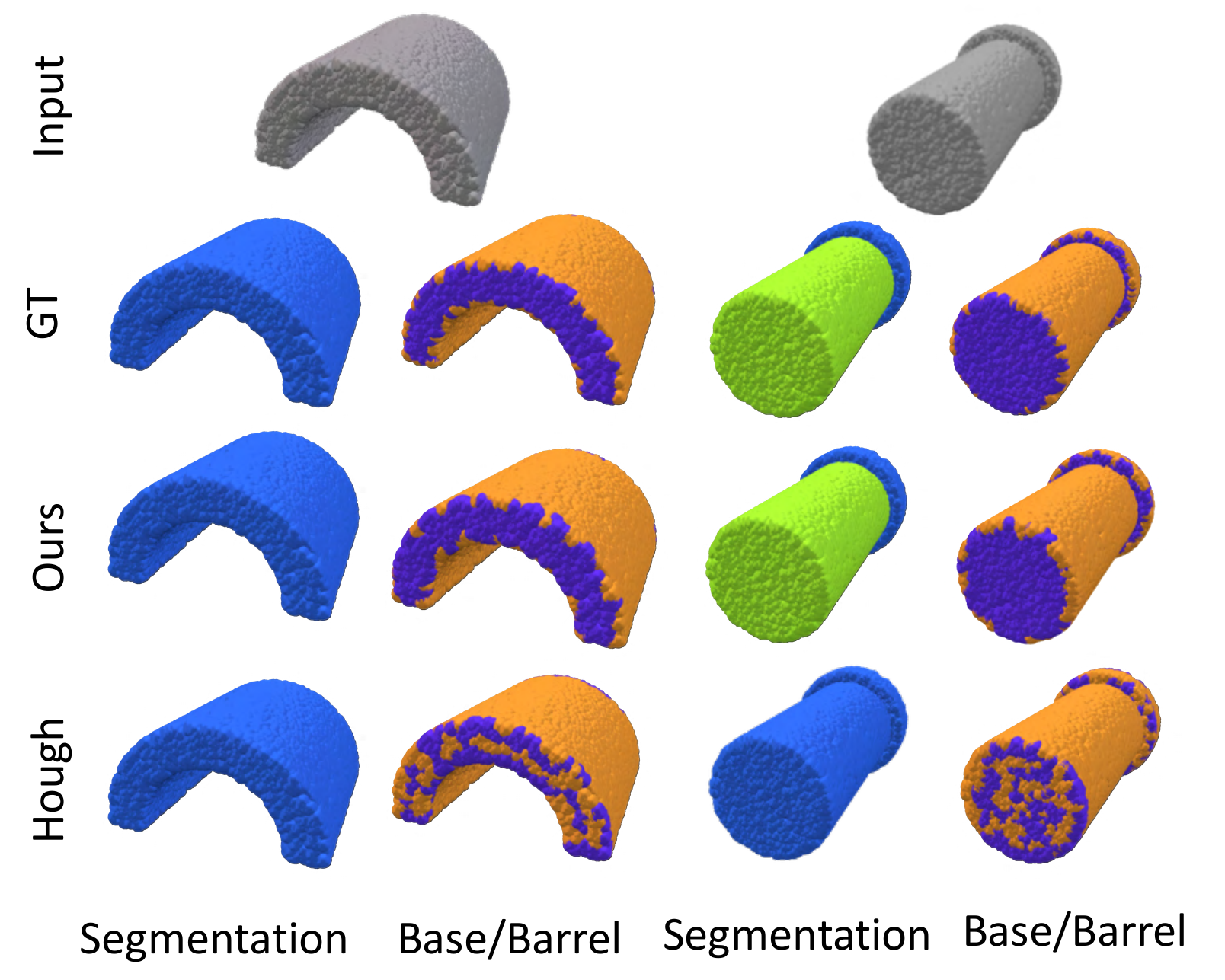}\vspace{-3mm}
    \captionof{figure}{Comparisons with Hough voting baseline. 
    }
    \label{fig:hough}
\end{minipage}
\vspace{-0.5cm}
\end{table*}

\paragraph{Evaluation metrics}
To the best of our knowledge, we are the firsts to address the problem at hand. Hence, we assess different aspects of our algorithm by introduction a diverse set of evaluation metrics, defined per model, as follows:
\begin{itemize}[leftmargin=\parindent,noitemsep,topsep=0.2em]
    \item{\textbf{Segmentation IoU (Seg.)}
    Assuming that the Hungarian matches reorders the extrusion segment into correspondence with the GT, we define the total RIoU loss: 
    $\frac{1}{K}\sum_{k=1}^{K}\text{RIoU}(\mathbbm{1}(\mathbf{\hat{W}}_{:,k}), \mathbf{W}_{:,k})$, where $\mathbbm{1}(\cdot)$ denotes a one-hot conversion.}
    \item{\textbf{Normal angle error (Norm.)}} We consider the agreement of GT and predicted normals:
    $\frac{1}{N}\sum_{i=1}^{N}\text{cos}^{-1}(|\mathbf{\hat{N}}_{i,:}^T\mathbf{N}_{i,:}|)$
    \item{\textbf{Base/barrel classification accuracy (B.B.)}} The amount of correctly predicted base/barrel labels reads:     $\frac{1}{N}\sum_{i=1}^{N} (\mathbbm{1}(\mathbf{\hat{B}_{i,:}})==\mathbf{B_{i,:}})$
    \item{\textbf{Extrusion-axis angle error (E.A.)}} defines the angular error between GT and predicted axes: $\frac{1}{K}\sum_{k=1}^{K}\text{cos}^{-1}(|\hat{\eaxis}_{k,:}^T\eaxis_{k,:}|)$
    \item{\textbf{Extrusion center error (E.C.)}} measures the distance to the GT center:  $\frac{1}{K}\sum_{k=1}^{K}||\hat{\ecenter}_{k,:}-\ecenter_{k,:}||_2$
    \item{\textbf{Per-extrusion cylinder fitting loss (Fit Cyl.}) measures how well the predicted unbounded extrusion cylinder parameters fit the GT extrusion cylinder segments:\vspace{-1mm}
    \begin{equation}
    \frac{1}{K}\sum_{k=1}^{K} \Big(\mathcal{F}\triangleq \sum_{\mathbf{s_i}\in \bar{\mathbf{S}}_k}  |\mathcal{S}(f_{\mathbf{\beta}}(\hat{\mathbf{S}}_k), \mathbf{s_i})|\Big),
    \end{equation}
    where $\bar{\mathbf{S}}_k = \hat{s}_k\mathbf{\Pi}(\mathbf{P}^{\text{barrel}_k}, \hat{\eaxis}_k, \hat{\ecenter}_k)$, which projects per-segment GT barrel points using predicted extrusion parameters. Let us call the inner summation $\mathcal{F}(\mathbf{P},k)$, which represents the goodness of fit for the $k^{th}$ extrusion.
    }
    \item{\textbf{Global fitting loss (Fit Glob.)} measures how at least one (any) of the predicted unbounded extrusion cylinders explain the input per each barrel point:\vspace{-1mm}
    \begin{equation}\frac{1}{|\mathbf{P}_\barrel|}\sum_{\mathbf{p}_i \in \mathbf{P}_\barrel} \min_{k=\{1, ...,K\}}|\mathcal{S}(f_{\mathbf{\beta}}(\hat{\mathbf{S}}_k), \mathbf{q}_i)|,
    \end{equation}
    where $\mathbf{q}_i = \hat{s}_k\Pi(\mathbf{p}_i , \hat{\eaxis}_k, \hat{\ecenter}_k)$.
    }
\end{itemize}
Final scores are the average of the metrics across all shapes.

\paragraph{Implementation and runtime} We implement our network using Pytorch and trained our models for $\sim300$ epochs/until convergence. Moreover, our approach has 3.6M parameters, and on a single Titan RTX GPU, it takes 0.41s for each training batch and 0.25s for a single model at inference time. 



\subsection{Baselines}
\paragraph{Classical approaches}
While the complexity of the extrusion axis decomposition problem makes it tedious to design handcrafted algorithms, we are still interested in seeing what our data-driven approach could offer over the classical methods. We first compare the point normals learned by \methodname with a strong geometric baseline \emph{jet fitting}~\cite{cazals2005estimating}, ($\mathbf{N_J}$).
We then use these jet-normals to vote for the extrusion axes via Hough voting (\textbf{H.V.}). To distinguish multiple primitives, we back the voting by a mean-shift mode estimation~\cite{cheng1995mean}. Eventually, this baseline produces both the predicted segments as well as their corresponding extrusion axes. Hence, we can compute the per-point base/barrel labels by taking the dot product between point normals and extrusion axes. The other extrusion cylinder parameters can then be computed as described in our method. 



\paragraph{Direct prediction (\textbf{D.P.})}
We introduce a second baseline, in which a deep neural network that is trained to \emph{directly predict} the set of extrusion cylinder parameters \emph{without} first learning our geometric proxies which are the per point segmentation and normals. 
As such, it lacks the appropriate geometric inductive biases present in our method.
\textbf{D.P.} yields $K$ sets of extrusion parameters $\{(\hat{\eaxis}_k), \hat{\ecenter}_k), \hat{s}_k), g_{\text{DP}}(\hat{\mathbf{S}_k}) \}$\footnote{$g_{\text{DP}}$ is analogous to $g$ in $\mathcal{L}_{\text{sketch}}$}, with a supervision signal per each parameter.
We use Hungarian matching on $\mathcal{F}(\mathbf{P}, k)$, as previously used in Fit Cyl., to sort the predictions in the same order as the ground truth. 

The details of both (i) deriving the Hough transform for our primitives and (ii) the architecture of the \textbf{D.P.} baseline can be found in the supplementary document. 


\begin{figure*}[t!]
    \centering
    \includegraphics[width=0.97\linewidth]{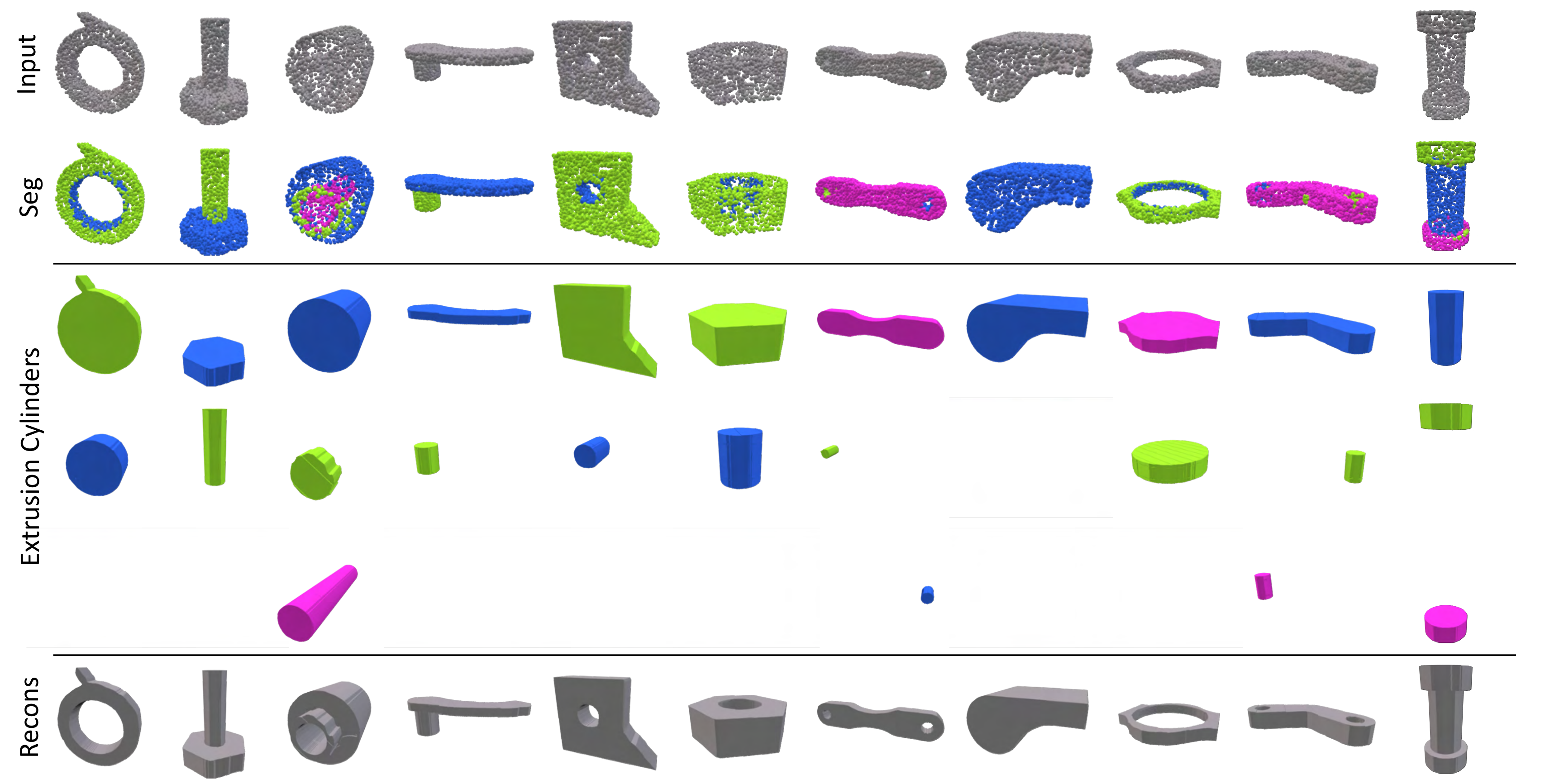}
    \caption{Qualitative examples for reconstruction. Figure shows (top-to-bottom) (1) input point clouds, (2) our predicted segmentation, (3-5) corresponding set of extrusion cylinders and (6) our final reconstruction. This figure also illustrates that individual extrusion cylinders from our decomposition result from a variety of closed loops. \vspace{-0.5cm}}
    \label{fig:reconstruction}
\end{figure*}

\subsection{Experimental Evaluation}
\paragraph{Quantitative evaluations}
\cref{tab:quantitative} shows our quantitative results on both the smaller Fusion Gallery and the big DeepCAD datasets. Our experiments demonstrate that our approach outperforms the other baselines (top two rows of each sub-block) across all metrics, often by significant margins, \ie $>15^\circ$ in E.A. Note that as the D.P. method does not produce intermediary proxies, we cannot evaluate those. 

In general, we see that even the direct prediction can work better than a simple handcrafted baseline (\textbf{H.V.}). This is further illustrated in~\cref{fig:hough} (left two columns) which shows a typical case where \textbf{H.V.} is able to vote for a reasonable extrusion axis, achieving reasonable base/barrel classification. However, as~\cref{fig:hough} (right two columns) illustrates, Hough cannot disambiguate segments that share the same extrusion axis. Such dependencies are hard to model. However, our learning-based approach can handle such cases thanks to its adaptivity to data. This further verifies the difficulty of manually modeling our problem. 

Additionally, our results affirm the validity of our geometric biases by revealing that predicting extrusion cylinder primitives through our proxy losses yields more faithful primitives compared to directly regressing the parameters as done by \textbf{D.P.}. 

\paragraph{Reverse engineering}
We propose two frameworks for reconstruction from extrusion cylinders. 
First, is a volume-based reconstruction engine, which builds a 3D signed distance volume by composing each of the extruded 2D sketch implicit fields from our prediction. The 3D transformation and scale are obtained from the corresponding extrusion cylinder parameters.
We also refine our segmentation prediction by a simple post-filtering and use robust methods in estimation of the scale and extent.
More on the post-processing details lie in the supplementary.~\cref{fig:reconstruction} shows representative examples reconstructed by our predicted extrusion cylinders using our volumetric reconstruction pipeline. The figure also presents the individual extrusion primitives, individually reconstructed. We can see that the predicted extrusion cylinders result from a variety of closed loops, showing that it is different from the standard primitive fitting works where the essential form of the primitive remains intact.

Our second reconstruction method exploits existing CAD modeller software such as Fusion360. To prepare the right input, we first extract the 2D sketch profiles by running marching squares on the predicted sketch implicits and then define the transformation matrices for each primitive using the extrusion cylinder parameters. Bottom-left most two shapes in~\cref{fig:teaser} depict such reconstructions.

\paragraph{CAD editing}
Once reconstructed using the latter technique, the outputs can be loaded into the existing CAD modellers for further editing such as adding fillets and chamfers. A sequence of such successful edits are shown in~\cref{fig:teaser} (bottom rows). As shown, we can create different interpretable shape variations from our extrusion cylinders by varying their parameters and/or their operation. Note that, exploring the space of such variations in a controllable manner would be challenging for any existing generative model such as~\cite{wu2021deepcad}. 


\paragraph{Ablation studies}
Finally, we ablate our approach by leaving out optional components such as the light sketch prior $g$ (w/o $\mathcal{L}_{\mathrm{sketch}}$) or data-driven normal estimation in addition to the sketch prior (w/o $\mathcal{L}_{\mathrm{sketch}}+\N_\mathbf{J}$) as shown in \cref{tab:quantitative}. 
These variants are trained similar to our original network to predict geometric proxies, but without applying the corresponding losses to backpropagate the gradients. Note that, without the sketch prior, the network can not regularize to produce segments that project close to a closed loop sketch, while not learning the normals cannot adaptively correct the issues or biases inherent in the hand-crafted estimation. For these reasons, both of these variants under-perform Point2Cyl. Though, they could still perform reasonable well when compared to the direct prediction or Hough baselines.


\section{Conclusion}\label{sec:conclude}
\vspace{-0.2cm}
We present \methodname, for reverse engineering 3D CAD models into primitives interpretable and usable by CAD designers. To solve this challenging discrete-continuous decomposition problem, we first introduced the \primname and developed its foundations for fitting to point sets. We then proposed differentiable algorithms suitable for a neural architecture, which partitions a point cloud into a set of \primnames. Our network benefited from a set of proxy predictions, which are shown to inject the correct geometric inductive biases. As opposed to standard primitive fitting, the output of \methodname allows for shape variations, and can be directly imported into existing CAD modellers for further reconstruction, visualization and re-usability.

\vspace{-0.1cm}
\paragraph{Limitations} Our approach 
does not make use of a known ordering for the decomposition or reconstruction. This circumvents assembly ambiguities and is beneficial for our training pipeline.
Furthermore, each extrusion cylinder primitive can either be a positive or negative volume with respect to the final shape. We also cannot disambiguate certain primitives \eg a box-like shape. 


\vspace{-0.1cm}
\paragraph{Acknowledgements} This work is supported by ARL grant W911NF-21-2-0104, a Vannevar Bush Faculty Fellowship, and gifts from the Autodesk and Adobe corporations.
M. Sung also acknowledges the support by NRF grant (2021R1F1A1045604) and NST grant (CRC 21011) funded by the Korea government(MSIT) and grants from the Adobe and KT corporations.

\bibliography{main}
\bibliographystyle{plain}

\renewcommand{\thesection}{A}
\renewcommand{\thetable}{A\arabic{table}}
\renewcommand{\thefigure}{A\arabic{figure}}

\clearpage
\section{Appendix}

\subsection{Extrusion Cylinders vs. Traditional Primitives}
\label{supp_primitives}
Our work focuses on decomposing an object into \emph{extrusion cylinders}, which are very common building blocks in CAD design, covering $>80\%/70\%$ of the faces in existing datasets~\cite{koch2019abc, brepnet}.
Extrusion cylinders are described by any arbitrary closed loop and hence do not assume fixed/regular geometry as in existing work that handle traditional \emph{\textbf{discrete/fixed}} primitives. Thus, they are not detectable by existing traditional primitive fitting works. 
For comparison, we ran a pre-trained model of SPFN~\cite{spfn} on our test set, which resulted to a fitting loss of 0.1111, compared to our 0.0305 in the main paper. 
By our primitive definition, our method alone cannot be used to recover primitives such as spheres and cones. An immediate solution is to integrate our method with existing primitive fitting work, \eg SPFN~\cite{spfn}, to jointly handle diverse types of primitives.
We also make the distinction that our work recovers primitive \emph{volumes} instead of surfaces, from works such as SPFN~\cite{spfn} that requires further stitching and cannot be used directly in boolean operations.
\begin{wrapfigure}{r}{0.06\textwidth}
\vspace{-5.75mm}
\hspace{-5.5mm}
\hspace*{-3mm}\includegraphics[width=0.1\textwidth]{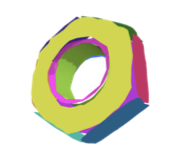}
\vspace{-5mm}
    \label{fig:cylinders}
\end{wrapfigure}
See illustration of a nut on the right in Fig.~\ref{fig:realscans_failurecases}-b. 
In SPFN, the single extrusion will be represented with nine \emph{surfaces}: six side planes, two top/bottom planes, and one inner cylinder.

\subsection{Theorem Proofs and Robustness}
\label{supp_theorems}
\subsubsection{Proof of Theorem 2}

We start by a short summary of Theorem 2 of the main paper:
\setcounter{thm}{1}
\begin{thm}[Weighted recovery of \textbf{extrusion axis} from points]
For a general weighted point set, the optimal \textbf{extrusion axis} is given by $\hat{\eaxis}=\argmin_{\eaxis, ||\eaxis||=1}(\eaxis^\top \Hm_{\Weights} \eaxis)$, where:
\begin{equation}\label{eq:eigen}
\Hm_{\Weights} = \N^\top\Weights_\barrel^\top\Weights_\barrel\N-\N^\top\Weights_\base^\top\Weights_\base\N.
\end{equation}
where $\Weights_\barrel = \mathrm{diag}(\weights_\barrel), \Weights_\base= \mathrm{diag}(\weights_\base) \in\R^{N\times N}$. $\weights_\barrel$/$\weights_\base$ indicate the barrel/base weights assigned to all points, respectively. 
\end{thm}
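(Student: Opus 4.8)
The plan is to mirror the variational argument used in the proof of Theorem 1, replacing the hard base/barrel partition by a soft, per-point membership encoded in the diagonal weight matrices. In Theorem 1 the points are split cleanly into $\PC_\base$ and $\PC_\barrel$, and the axis is obtained by simultaneously maximizing $\|\N_\base\eaxis\|^2$ (base normals aligned with $\eaxis$) and minimizing $\|\N_\barrel\eaxis\|^2$ (barrel normals orthogonal to $\eaxis$). The weighted statement keeps the full normal matrix $\N\in\Rbb^{N\times 3}$ and instead assigns every point two confidences, $\weights_{\barrel,i}$ and $\weights_{\base,i}$, measuring how much point $i$ should contribute to the barrel and base objectives, respectively; outliers and uncertain points then receive small weights in both.

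First I would rewrite each objective as a weighted sum of squared alignments: the base term becomes $\sum_{i=1}^N \weights_{\base,i}^2\,(\n_i^\top\eaxis)^2$ and the barrel term $\sum_{i=1}^N \weights_{\barrel,i}^2\,(\n_i^\top\eaxis)^2$. The key algebraic step is to recognize each as a quadratic form: scaling the $i$-th row of $\N$ by its weight, i.e.\ forming $\Weights_\base\N$ and $\Weights_\barrel\N$, and taking the Gram matrix gives
\begin{align*}
\sum_i \weights_{\base,i}^2(\n_i^\top\eaxis)^2 &= \eaxis^\top\N^\top\Weights_\base^\top\Weights_\base\N\,\eaxis,\\
\sum_i \weights_{\barrel,i}^2(\n_i^\top\eaxis)^2 &= \eaxis^\top\N^\top\Weights_\barrel^\top\Weights_\barrel\N\,\eaxis,
\end{align*}
using that $\Weights_\base,\Weights_\barrel$ are diagonal. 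Negating the base objective to convert its maximization into a minimization and summing the two objectives collapses them into a single Rayleigh quotient with matrix $\Hm_{\Weights}=\N^\top\Weights_\barrel^\top\Weights_\barrel\N-\N^\top\Weights_\base^\top\Weights_\base\N$, exactly as claimed.

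Finally I would invoke the standard Courant--Fischer / Rayleigh-quotient characterization: minimizing $\eaxis^\top\Hm_{\Weights}\eaxis$ over the unit sphere $\|\eaxis\|=1$ is solved by the unit eigenvector of the symmetric $3\times3$ matrix $\Hm_{\Weights}$ associated with its smallest eigenvalue. I would also record the sanity check that setting $\weights_\barrel,\weights_\base$ to the $0/1$ indicator vectors of the barrel/base subsets recovers $\Hm_{\Weights}=\N_\barrel^\top\N_\barrel-\N_\base^\top\N_\base=\Hm$, so Theorem 2 specializes to Theorem 1. The main subtlety to argue carefully is the legitimacy of the soft relaxation itself: unlike the hard case, no individual summand satisfies $\n_i^\top\eaxis=0$ or $\pm1$ exactly, so I must justify that the weighted sum-of-squares is the right surrogate --- namely that each weight acts purely as a nonnegative importance multiplier, that $\Hm_{\Weights}$ is therefore a difference of two positive-semidefinite Gram matrices (symmetric but in general indefinite), and that the Rayleigh-quotient minimization remains well-posed and still selects the smallest eigenvector despite this indefiniteness.
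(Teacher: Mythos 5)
Your proposal is correct and follows essentially the same route as the paper's own proof: form the weighted objective (barrel term minus base term), absorb the weights into the normals by row-scaling $\N$ with the diagonal matrices $\Weights_\barrel$, $\Weights_\base$, collapse the result into the quadratic form $\eaxis^\top\Hm_{\Weights}\eaxis$, and conclude via the Rayleigh-quotient characterization that the minimizer over $\|\eaxis\|=1$ is the eigenvector of the smallest eigenvalue of $\Hm_{\Weights}$. Your added remarks --- the sanity check that indicator weights recover Theorem 1, and that the minimization remains well-posed even though $\Hm_{\Weights}$ is a difference of PSD Gram matrices and hence generally indefinite --- are refinements the paper leaves implicit, not a different argument.
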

\begin{proof}
First, we use the observation in Eq. 1 of the main paper to formulate an objective $\loss_{\eaxis}$, whose minimum is attained at the point where $\eaxis$ is the best fitting extrusion axis. Next, we show that $\loss_{\eaxis}$ can be re-organized absorbing the weights into surface normals. Finally, we re-arrange the result into matrix form to obtain $\Hm_{\Weights}$. These steps read: 
\begin{align}\label{eq:Lreorg}
    \loss_{\eaxis} &= \sum_{j\in\mathrm{barrel}} w_j^2 (\n_j^\top \eaxis)^2 - \sum_{i\in\mathrm{base}} w_i^2 (\n_i^\top \eaxis)^2\\
    &= \sum_{j\in\mathrm{barrel}} (w_j\n^\top_j\eaxis)^\top (w_j\n^\top_j\eaxis) - \sum_{i\in\mathrm{base}} (w_i\n^\top_i\eaxis)^\top (w_i\n^\top_i\eaxis)\nonumber\\
    &= \sum_{i=1}^N (\phi_i^\barrel\n^\top_i\eaxis)^\top (\phi_i^\barrel\n^\top_i\eaxis) - (\phi_i^\base\n^\top_i\eaxis)^\top (\phi_i^\base\n^\top_i\eaxis)\nonumber\\
    &= \sum_{i=1}^N \eaxis^\top (\n_i^\top\phi^{\barrel^\top}\phi^{\barrel}_i\n_i-\n_i^\top\phi^{\base^\top}\phi^{\base}_i\n_i) \, \eaxis\\
    &= \eaxis^\top \Hm_{\Weights} \eaxis \label{eq:thm3H}
\end{align}
and therefore $\argmin_{\eaxis} \loss_\eaxis = \argmin_{\eaxis}\eaxis^\top \Hm_{\Weights} \eaxis$.
Here, scalars $w_i$ and $w_j$ can be viewed as matrices of size $1\times 1$. The second equality follows from gathering all the weights into $\weights$ such that $\phi^\barrel_i=0$ whenever $i$ indicates a base point, and vice versa. We collect those into vectors $\weights_\barrel=\{\phi_i^\barrel\}$, $\weights_\base=\{\phi_i^\base\}$.~\cref{eq:Lreorg} clearly shows that the weights that control the contribution of each normal can be absorbed into the normals themselves. The last step in~\cref{eq:thm3H} develops by virtue of this, \ie we can define a \emph{weighted normal} \ie $\bar{\n}_i\triangleq \n_i \phi^\barrel_i$ and exploit Thm. 1 to write:
\begin{align}
\Hm_{\Weights}&=(\bar{\N}_\barrel^\top\bar{\N}_\barrel-\bar{\N}_\base^\top\bar{\N}_\base)\\
&=(\Weights_\barrel\N)^\top(\Weights_\barrel\N)-(\Weights_\base\N)^\top(\Weights_\base\N)\\
&= \N^\top\Weights_\barrel^\top\Weights_\barrel\N-\N^\top\Weights_\base^\top\Weights_\base\N.
\end{align}
The second equality follows from individually weighting the normals using $\Weights_\barrel = \mathrm{diag}(\weights_\barrel), \Weights_\base= \mathrm{diag}(\weights_\base) \in\R^{N\times N}$.
Note that this constructive approach also presents a perspective on the energy induced by our eigenvector problem. As $\|\eaxis\|=1$ is desired,
the solution is given by the eigenvector corresponding to the smallest eigenvalue of $\Hm_{\Weights}$.
\end{proof}
On a closer look, this approach resembles a graph partitioning where the cost is defined over the surface normals. We leave further analysis as a future work.

\begin{figure}
    \centering
    \includegraphics[width=0.65\linewidth]{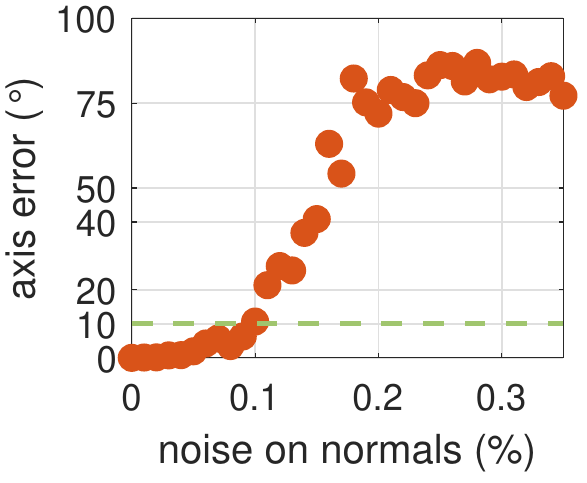}
    \caption{Robustness under noisy point normals.}
    \label{fig:noise}    
\end{figure}

\begin{figure*}
    \centering
    \includegraphics[width=\linewidth]{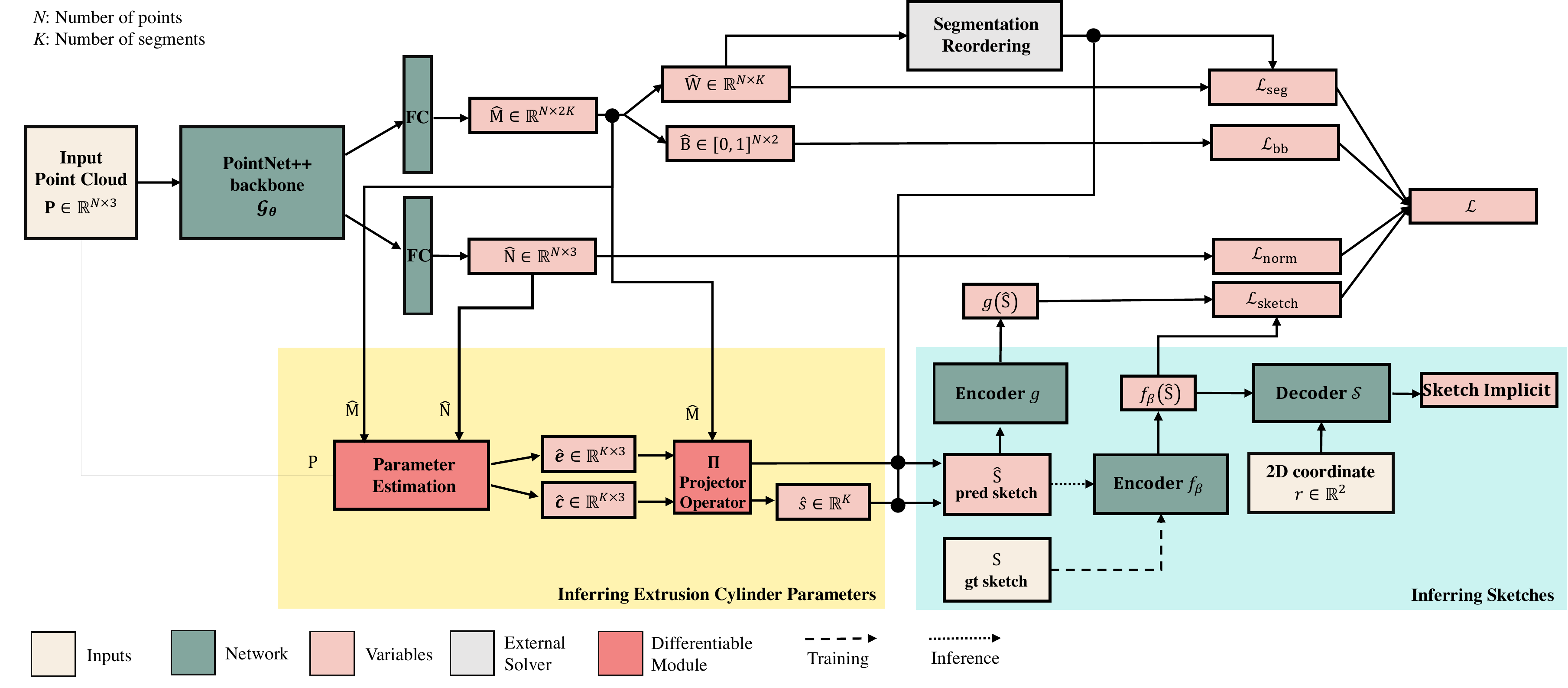}
    \caption{Network architecture of our Point2Cyl.}
    \label{fig:network}
\end{figure*}

\subsubsection{Proof of Theorem 3}
Ideally, we would like to predict a minimal set of parameters for obtaining $\hat{\M}$, which contains $N\times 2K$ parameters. An obvious question arises when we consider the two outputs obtained through $\hat{\M}$: $\hat{\W}$ and $\hat{\B}$. In total these two matrices contain $N\times (K+2)$ unknowns. From this lens, it seems tempting to predict $\hat{\W}$ and $\hat{\B}$ instead. We now re-state Theorem 3 of the main paper arguing that knowing $\hat{\W}$ and $\hat{\B}$ is not sufficient to analytically compute $\hat{\M}$:

\begin{thm}
Matrix $\hat{\M}$ cannot be uniquely recovered from $\hat{\W}$ and $\hat{\B}$.
\end{thm}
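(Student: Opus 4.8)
The plan is to show that the row-wise linear map sending $\hat{\M}$ to the pair $(\hat{\W}, \hat{\B})$ has a nontrivial kernel, so that infinitely many distinct $\hat{\M}$ produce the same $(\hat{\W}, \hat{\B})$. Since all the recovery operations act independently across rows, I would first restrict attention to a single row $\mathbf{m}\in\Rbb^{2K}$ of $\hat{\M}$, writing its entries as $m_k^\barrel=\hat{\M}_{i,2k}$ and $m_k^\base=\hat{\M}_{i,2k+1}$ for $k\in\{1,\dots,K\}$. The recovery formulas then read $\hat{\W}_{i,k}=m_k^\barrel+m_k^\base$, $\hat{\B}_{i,0}=\sum_k m_k^\barrel$, and $\hat{\B}_{i,1}=\sum_k m_k^\base$, i.e. they define a linear map $\mathcal{T}:\Rbb^{2K}\to\Rbb^{K+2}$, and recovering $\hat{\M}$ amounts to inverting $\mathcal{T}$.

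Next I would count degrees of freedom. The $2K$ unknowns of the row are constrained by $K+2$ outputs, but these outputs are dependent: summing the $K$ segment equations gives $\sum_k \hat{\W}_{i,k}=\hat{\B}_{i,0}+\hat{\B}_{i,1}$, so $\mathcal{T}$ has rank at most $K+1$. Consequently $\dim\ker\mathcal{T}\ge 2K-(K+1)=K-1\ge 1$ whenever $K\ge 2$, which is exactly the regime of interest. A nonzero kernel immediately forbids a unique inverse and hence yields non-uniqueness.

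To make this concrete—and to exhibit admissible (row-stochastic) counterexamples rather than merely abstract kernel elements—I would display an explicit null vector. For two segments the direction $\mathbf{v}=(+1,-1,-1,+1)$, in the order $m_1^\barrel, m_1^\base, m_2^\barrel, m_2^\base$, is annihilated by $\mathcal{T}$: it preserves each pairwise sum $\hat{\W}_{i,k}$ and cancels in both column-sums $\hat{\B}$. Embedding this into any fixed $K\ge 2$ by acting only on segments $1$ and $2$, I would take a base matrix $\hat{\M}$ whose relevant entries lie in the open interval $(0,1)$ and set $\hat{\M}'=\hat{\M}+\epsilon\mathbf{v}$ on that single row, for $\epsilon$ small enough that every entry remains in $[0,1]$. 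Then $\hat{\M}'$ is a legitimate row-stochastic matrix with $\hat{\M}'\ne\hat{\M}$ but identical $(\hat{\W},\hat{\B})$, which completes the argument.

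The only delicate point—and where I would take the most care—is respecting the simplex constraints on $\hat{\M}$: the kernel direction must keep every entry in $[0,1]$ and preserve the unit row-sum. Both are handled by the sign pattern of $\mathbf{v}$, whose entries sum to zero (so normalization is untouched), together with choosing interior base entries and a sufficiently small $\epsilon$. I would also flag the degenerate case $K=1$, where $\mathcal{T}$ is in fact invertible and $\hat{\M}$ \emph{can} be recovered; the non-recoverability claim therefore genuinely requires $K\ge 2$, which is the relevant multi-segment setting.
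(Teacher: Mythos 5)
Your proof is correct, and it takes a genuinely different (and in places stronger) route than the paper's. The paper also reduces the question to a rank count, but does so globally: it writes the constraints as $\hat{\W}=\hat{\M}\Delta_W$, $\hat{\B}=\hat{\M}\Delta_B$, $\hat{\M}\one=\one$ with Kronecker-structured matrices $\Delta_W=\Eye_{K\times K}\otimes(1,1)^\top$ and $\Delta_B=\one\otimes\Eye_{2\times 2}$, forms the least-squares normal equations with the matrix $\D=\Delta_W\Delta_W^\top+\Delta_B\Delta_B^\top+\one\one^\top$, and observes that $\mathrm{rank}(\D)=K+1<2K$, so the linear system for $\hat{\M}$ is underdetermined. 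Your row-wise argument rests on the same counting insight (the $K+2$ outputs satisfy the single dependency $\sum_k\hat{\W}_{i,k}=\hat{\B}_{i,0}+\hat{\B}_{i,1}$, leaving at most $K+1$ independent equations against $2K$ unknowns per row), but you add two things of value. First, you exhibit an explicit kernel direction and use it to construct two \emph{admissible} row-stochastic matrices with identical $(\hat{\W},\hat{\B})$; the paper's rank-deficiency argument by itself does not verify that the ambiguity survives the simplex constraints (entries in $[0,1]$, unit row sums), which is the regime the network actually operates in, so your counterexample is the strictly stronger demonstration of non-uniqueness. Second, you correctly flag the degenerate case $K=1$, where $\hat{\B}$ alone determines $\hat{\M}$ and the statement fails; the paper never makes this restriction explicit, though its own count ($K+1=2K$ at $K=1$) is consistent with your observation. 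In short: same underlying dimension count, but your version is more elementary, constructive, and careful about feasibility and the edge case.
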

\begin{proof}
We begin by assuming that recovering $\hat{M}\in\Rbb^{N\times 2K}$ from $\hat{W}\in\Rbb^{N\times K}$ and $\hat{B}\in\Rbb^{N\times 2}$ would be possible. Hence, we assume the availability of $(\hat{W},\hat{B})$, while $\hat{M}$ remains unknown and make use of the constraints at our disposal. First, we organize the relationship of $\hat{W}$ and $\hat{B}$ to $\hat{M}$ (as given in the main paper, Sec. 4.1) into matrix notation:
\begin{align}
    \hat{W} = \hat{M} \Delta_W\qquad , \qquad    \hat{B} = \hat{M} \Delta_\B
\end{align}
where: 
\begin{align}
\Delta_W=\Eye_{K\times K} \otimes \begin{bmatrix} 1\\1 \end{bmatrix}\quad,\quad
\Delta_B=\one \otimes \Eye_{2\times 2}.
\end{align}
$\Eye_{N\times N}$ denotes an $N\times N$ identity matrix and $\otimes$, the Kronecker product. Below, we show these matrices for the case of $K=3$:
\begin{align}
    \Delta_W^{K=3} = \begin{bmatrix}
    1 & 0 & 0 \\
    1 & 0 & 0 \\
    0 & 1 & 0 \\
    0 & 1 & 0 \\
    0 & 0 & 1 \\
    0 & 0 & 1 \\
    \end{bmatrix} \quad, \quad
    \Delta_B^{K=3} = \begin{bmatrix}
    1 & 0\\
    0 & 1\\
    1 & 0\\
    0 & 1\\
    1 & 0\\
    0 & 1\\
    \end{bmatrix}
\end{align}
From the structure of $\M$, we also have $\M\one = \one$.
Note that, $\Delta_W$ and $\Delta_B$ are both non-square and hence non-invertible\footnote{Using pseudoinverse does not give us the exact solution in this case.}. However, it is of interest to see whether these two constraints would be simultaneously sufficient to recover $\hat{\M}$. To this end, we convert the observations thusfar into the following constraints:
\begin{align}
    \|\hat{\M}\Delta_W - \hat{\W}\| \to \min \quad \|\hat{\M}\Delta_B - \hat{\B}\| \to \min.
\end{align}
To ensure those, we minimize a loss $\loss_\M$:
\begin{align*}
    \loss_\M = \|\hat{\M}\Delta_W - \hat{\W}\|^2 + \|\hat{\M}\Delta_B - \hat{\B}\|^2 + \|\M\one-\one^\prime\|^2,
\end{align*}
by letting $\nabla_\M\loss_\M=0$. After simple derivations and a re-arrangement, this yields:
\begin{align}
    \M (\Delta_W\Delta_W^\top+\Delta_B\Delta_B^\top+\one\one^\top) =\hat{\W}\Delta_W^\top+\hat{\B}\Delta_B^\top +\one^\prime \one^\top \nonumber
\end{align}
where $\one$ and $\one^\prime$ denote one-vectors of different lengths. 
Note that the matrix $\D \triangleq (\Delta_W\Delta_W^\top+\Delta_B\Delta_B^\top+\one\one^\top)$ is known in advance and can be pre-computed. However, its rank will always be $\mathrm{rank}(\D)= K+1$ and therefore the number of linearly independent equations is insufficient to solve for $\M$. This concludes our proof that without further regularity, $\hat{\M}$ cannot be uniquely recovered from $\hat{\W}$ and $\hat{\B}$.
\end{proof}

\subsubsection{Robustness under Noisy Point Normals}
We include an experiment where the normals of the extrusion in Fig. 4 (main paper) is perturbed by an increasing amount of Gaussian noise. As shown on Fig.~\ref{fig:noise}, our fit could tolerate (error $<10^\circ$) $\sim 10\%$ noise on the normals, which is a realistic setting.
\subsection{Additional Details for Our Point2Cyl}
\label{supp_point2cyl}
We provide additional implementation details for our Point2Cyl. \cref{fig:network} shows our overall pipeline. We randomly sample 8192 points for the point cloud inputs for networks taken from the underlying mesh for each model. We set $K=8$ as the maximum number of extrusion segments and trained with a batch size of 4. When the number of extrusion cylinder segments in the ground truth label is less than $K$, the additional segments not matched with any of that in the ground truth are not included in the loss calculation.
$\mathcal{S}$ was trained with normalized 2D point clouds with 2048 points, with a batch size of 8 with $\lambda_1 = 0.1, \lambda_2 =1$. We use initial learning rate of 0.001 with a decay of 0.7. All models are trained for around 300 epochs or until convergence using the Adam optimizer. 

We further clarify that our predicted extrusion cylinders can either be positive/negative volumes, and it can be inferred at post-processing. One can project (oriented) 3D point normals onto the sketch plane and compare it with the gradient of the predicted sketch implicit function to determine additive/subtractive cylinders.

\subsection{Implementation Details of Baselines}
\label{supp_baselines}

\begin{figure*}[t!]
    \centering
    \includegraphics[width=0.8\linewidth]{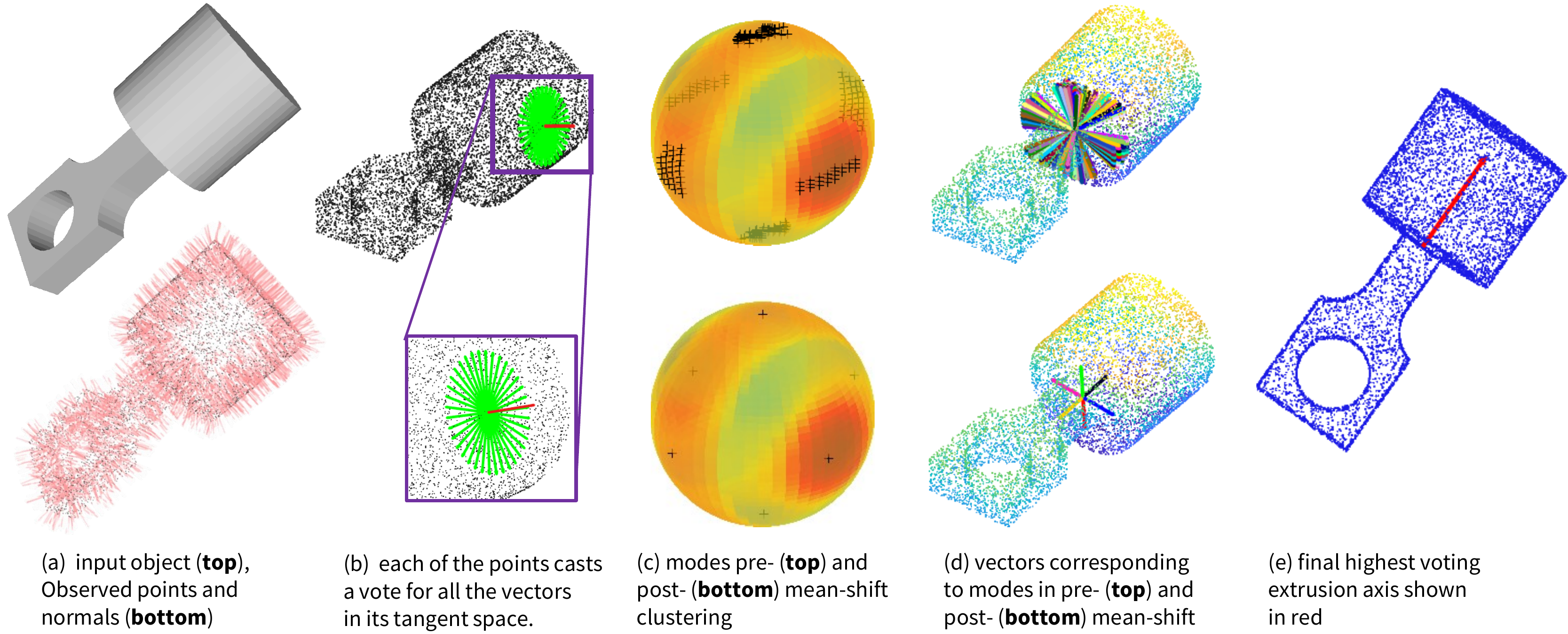}
    \caption{Steps of our Hough voting baseline for extrusion axis detection. Note that, our Point2Cyl algorithm can outperform this baseline. Nevertheless, it is still visible that this baseline can also tolerate clutter and confusing structures to a certain extent.}
    \label{fig:hough}
\end{figure*}
\subsubsection{Hough Voting for \primnames}
While Hough voting and its generalized variants are used in simple primitive detection~\cite{birdal2019generic,sommer2020primitect,Schnabel2007} or general object detection~\cite{drost2010model,birdal2015point,tejani2014latent}, their readily available adaptation to our problem remains unexplored. Hence, we create our Hough baseline by proposing a novel voting strategy for detection of extrusion cylinders.

In particular, we would like to model the \emph{likelihood} of an extrusion axis given all the surface normals in the data.
For an extrusion hypothesis, this reads:
\begin{align}
    p(\eaxis\,|\, \n_1,\dots,\n_N) &= p(\eaxis) \prod\limits_{i=1}^N p(\n_i,\eaxis)\\
    &= \frac{p(\eaxis) \prod\limits_{i=1}^N p(\n_i\,|\,\eaxis)}{p(\n_1,\dots,\n_N)}\\
    &= \alpha\prod\limits_{i=1}^N p(\n_i\,|\,\eaxis).
\end{align}
The final equality follows from the assumption of a uniform prior and constant normalizing factor. Taking the logarithm of both sides, we obtain:
\begin{equation}\label{eq:houghmle}
\log p(\eaxis\,|\, \n_1,\dots,\n_N) = \log\alpha + \sum\limits_{i=1}^N \log p(\n_i\,|\, \eaxis).
\end{equation}
At this stage we propose to model $p(\n_i\,|\, \eaxis)$ as a \emph{Gibbs measure}:
\begin{align}\label{eq:pnconde}
    p(\n_i\,|\, \eaxis) = \frac{1}{\beta} \exp(-\gamma v(\eaxis,\n_i)).
\end{align}
with $\beta$ being its normalizing constant. Plugging~\cref{eq:pnconde} into~\cref{eq:houghmle} leads to the Hough Transform for $\eaxis$,  $H(\eaxis)\triangleq\log p(\eaxis\,|\, \n_1,\dots,\n_N)$ :
\begin{align}
\log p(\eaxis\,|\, \n_1,\dots,\n_N) &= \log\alpha - N\log\beta + \sum\limits_{i=1}^N v(\eaxis,\n_i)\nonumber\\
H(\eaxis)&= c + \sum\limits_{i=1}^N v(\eaxis,\n_i)
\end{align}
where $c=\log\alpha - N\log\beta$ is a constant. Th  is suggest that the \emph{evidence} of an extrusion axis $\eaxis$ can be obtained by interpreted as the sum of the \emph{votes} cast per each normal $\n_i$.
Note that, in practice, leaving $c$ out leads to an \emph{unnormalized log probability distribution} over the extrusion axis given surface normals, \ie MAP (maximum a-posteriori) estimate trivializes to MLE (maximum likelihood estimation). This is what a Hough transform essentially computes.

The optionally \emph{fuzzy} voting function $v(\eaxis,\n_i)$ can be chosen to fit the geometric observation that the extrusion axis should lie in the tangent plane spanned by the surface normal, \ie it has to be \emph{normal to the normals}. With that, we can propose binary or non-binary voting functions:
\begin{align}
    v(\eaxis, \n_i) \triangleq \delta(|\eaxis_i^\top\n_i|) = \begin{cases}
    0, &|\eaxis_i^\top\n_i| < \epsilon \\
    1, &\text{\emph{o.w.}} \\
    \end{cases}
\end{align}
At this point, one can obtain a point estimate by:
\begin{equation}
    \eaxis^\star = \argmax_{\eaxis\in\mathbb{S}^2} H(\eaxis).
\end{equation}
However, for the cases where multiple modes as well as noise are present, such approach would not yield a robust estimate. Therefore, we shift our attention to \emph{mode seeking} and use the celebrated \emph{mean-shift algorithm}~\cite{cheng1995mean} to discover the modes of the underlying distribution. To this end, in practice, for each point, we maintain a set of random vectors lying in its tangent space defined by the surface normal. This defines an empirical/discrete scalar field over the \emph{Gaussian sphere} where each particle is a hypothesis for an extrusion axis. We then find the modes of this empirical distribution. 

Different stages of our Hough voting baseline are shown in~\cref{fig:hough}.

\begin{figure*}
    \centering
    \includegraphics[width=0.9\linewidth]{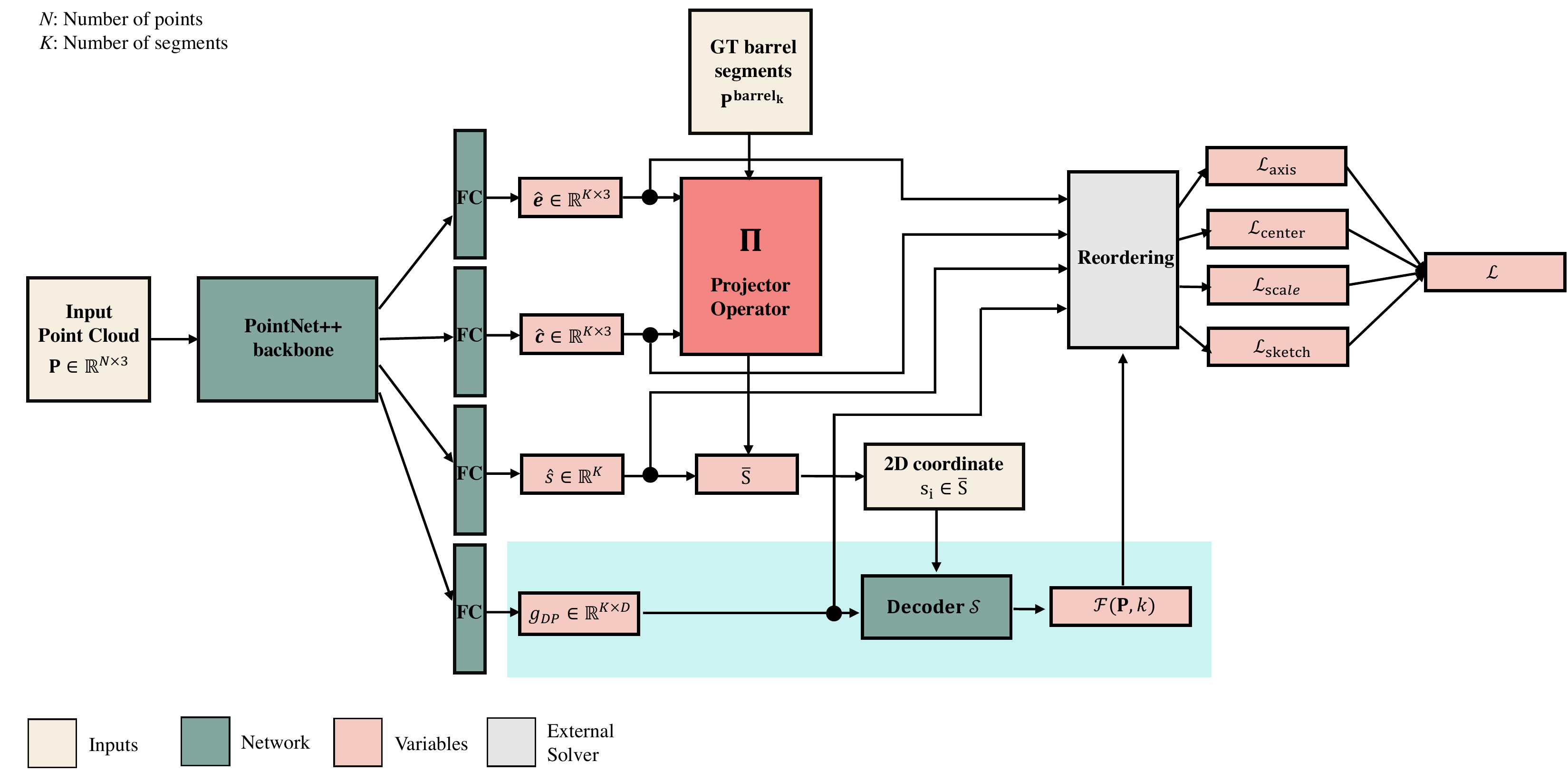}
    \caption{Network architecture of the direct prediction (D.P.) baseline.}
    \label{fig:dp_network}
\end{figure*}

\subsubsection{Direct Prediction (D.P.)}
We also provide additional details for \textbf{D.P.}, a baseline neural network that takes in point cloud $\mathbf{P}$ and directly predicts the extrusion parameters without segmentation into generalized cylinders. \cref{fig:dp_network} shows the network architecture for \textbf{D.P.}. We use a similar backbone as our network that encodes $P$ into a latent feature, and then we directly predict $K$ sets of extrusion parameters $(\hat{\mathbf{e}}, \hat{\mathbf{c}}, \hat{s}, g_{\text{DP}}(\hat{\mathbf{S}}_k))$ in four separate fully connected branches, where  $g_{\text{DP}}(\hat{\mathbf{S}}_k)) \in \mathbb{R}^D$ is the latent code that conditions the sketch decoder $\mathcal{S}$. To train the network, we project the barrel points for each ground truth segment $\mathbf{P}^{\text{barrel}_k}$ using each of the directly predicted extrusion parameters, similar to  $\mathcal{F}(\mathbf{P}, k)$ as introduced in Fit Cyl. in the Evaluation Metric section of the main paper. We use Hungarian matching on $\mathcal{F}(\mathbf{P}, k)$ to find correspondences with the ground truth extrusion parameters, and directly supervise each of the predicted parameters with their corresponding ground truth. We use the angular error (E.A.) for the extrusion axis, the distance/L2 loss for the extrusion center (E.C.), L1 for the extrusion scale, and $\mathcal{L}_{\text{sketch}}$ for $g_{\text{DP}}$. 


\subsection{Data Pre-Processing Details}
\label{supp_preprocessing}
We provide additional information on the data pre-processing used for both Fusion Gallery~\cite{willis2020fusion} and DeepCAD~\cite{wu2021deepcad}. We used the Reconstruction subset for Fusion Gallery~\cite{willis2020fusion}. For clarification on the ABC dataset~\cite{koch2019abc}, DeepCAD dataset~\cite{wu2021deepcad} is the subset of ABC dataset that collects the models constructed with sketch-extrude operations. Both datasets include json files that contain the step-by-step construction sequence and the resulting triangulated output model, represented as a mesh. We obtain the extrusion cylinder segmentation labels for each face of a model by tracking which construction operation in the json file created each of mesh faces in the final geometry. For each construction operation of each model, we also extract the corresponding extrusion axis from the json file.

We uniformly sample 8192 points over each model's surface as our input point cloud along with each point's corresponding normal and segmentation label from its mesh face, as previously described. The point clouds are also normalized to fit a unit sphere. We classify each point as base/barrel by checking the dot product between its normal and the extrusion axis of the extrusion cylinder segment it belongs to. We obtain the ground truth extrusion center by taking the mean of all barrel points from each segment. We also represent the sketch of each segment of the model with a point cloud by sampling 8192 points along the barrel faces, and projecting them onto the plane perpendicular to the corresponding extrusion axis, centering them with the extrusion center, and finally normalizing to fit a unit circle. 

We select a subset of all models that have $1-8$ extrusion cylinder segments. To balance our dataset, we also use a portion of the models with a single extrusion such that it only cover $\sim20\%$ of our data. To increase the number of training/test models, we also use the intermediate models in each construction sequence. Moreover, we discard models that had tapered extrusions, an extrusion segment with surface area $<2\%$ of the whole model, an extrusion segment that is too small whose extent is either too short $(<0.015)$ or had too few points $(<50)$.  We will release our processed data upon publication.

\subsection{Visualization Post-Processing Details}
\label{supp_postprocessing}
We also provide additional details on our post-processing step used for reconstruction refinement and the visualization of output models. 

We first refine the segmentation output of our network as follows: i) Using the initial predicted segmentation, we use DBSCAN to cluster the points that belong to the same segment. If a segment results in more than one cluster, we unlabel the points belonging to the smaller clusters. ii) An unlabeled point is labeled with the consensus of its neighbors. iii) A point is relabeled if its neighbors have a high consensus with a different label, and the neighborhood consensus is used as the new label of the point. 

We further use robust methods to estimate for scale and extent. To estimate for scale, we use RANSAC to randomly sample $1\%$ of the barrel points for each extrusion segment, which are then used to estimate for the scale, as described in our main paper. The scale estimate of the segment is accepted if it explains more than $80\%$ of all its barrel points. For extrusion extent, we use DBSCAN to cluster all the barrel points projected along the extrusion axis, and we calculate the extent based on the most dominant cluster.

Finally, we further optimize the sketch fitting by directly optimizing our sketch implicit network (modified from IGR~\cite{igr} as described in the main paper), to directly fit the projected barrel points based on our network's output for each individual segment.

\subsection{Additional Evaluations}
\label{supp_evaluations}

\subsubsection{Comparison with Conditional Generation Extension to DeepCAD~\cite{wu2021deepcad}}
We also analyze and compare our approach with a condition generation extension of DeepCAD~\cite{wu2021deepcad} that is cast as a future work in their paper. DeepCAD~\cite{wu2021deepcad} introduced a transformer-based generative model for CAD modeling sequences, similar to their proposed approach in their "Future Applications" section. We use the experimental code provided by the DeepCAD authors to encode point clouds using PointNet++ and map the resulting embeddings to the latent vector of their CAD sequence encoder from their original generative model.  The PointNet++ encoder was trained for 100 epochs with batch size of 128 on the same DeepCAD training subset as Point2Cyl. We use the Adam optimizer with initial learning rate of $10^{-4}$ and decay the learning rate by 0.1 every 30 epochs.


At inference time, we use the Pointnet++ encoder to get the latent embedding of the point cloud and the CAD sequence decoder to obtain the reconstruction. We use the same DeepCAD test subset as for Point2Cyl. We used the released implementation for DeepCAD sequence reconstruction and found that the modeller fails to reconstruct $11.5\%$ of the testing models from the output of the conditional DeepCAD generation. We report the fitting loss (Eq. 15 of the main paper) for DeepCAD (\textbf{0.0959}) vs. our model (\textbf{0.0758}) on the subset of models that~\cite{wu2021deepcad} was able to successfully reconstruct. We take the midpoint of the tokenized outputs from~\cite{wu2021deepcad} as the primitive parameters to obtain the corresponding extrusion cylinders that are used for loss computation. \cref{fig:supp_quali} shows some qualitative comparisons with reconstruction outputs from our Point2Cyl. These are examples where the DeepCAD reconstruction pipeline is able to output a solid. Results show that the outputs of DeepCAD does not always match the shape of the input geometry, and moreover, some examples show that DeepCAD often struggles to produce valid solid models in the output. 
In order to create a valid solid model, it is a requirement that the sketch profiles do not self-intersect.  When self-intersecting profiles are extruded, the resulting solids will not define a closed and watertight volume.  They will also fail the consistency checks of the solid modeling kernel and in some cases may fail to triangulate. Our Point2Cyl uses 2D implicits to define a sketch profile, and hence they will not intersect compared to DeepCAD, which can produce in self-intersecting geometry.
Also, our Point2Cyl is trained to minimize the fitting error, while the encoding-decoding architecture of DeepCAD is not trained to make the output \emph{fit} the input point cloud, and thus resulting in completely different shapes in some cases.


\begin{figure}[t]
\vspace{-2mm}
	\begin{center}
		\includegraphics[width=\linewidth]{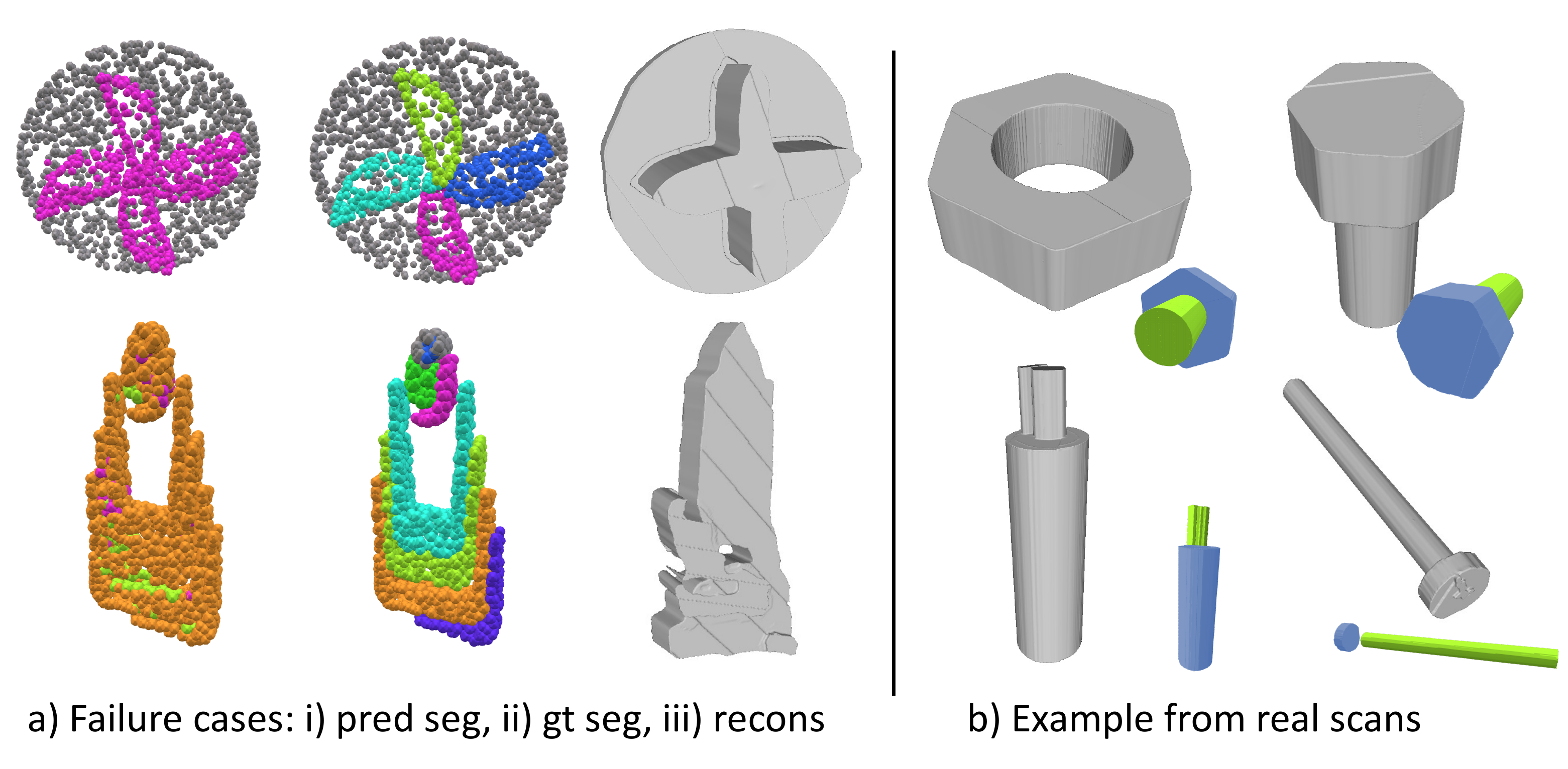}
	\end{center}
	\vspace{-0.5cm}
	\caption{a) Failure cases. b) Real scan results.}
	\vspace{-0.7cm}
	\label{fig:realscans_failurecases}
\end{figure}

\begin{table*}
    \centering
  \begin{tabular}{l|c|c|c|c|c|c|c|c}
    \toprule
    &$\sigma$&Seg.$\uparrow$ & Norm.$(\degree)\downarrow$ & B.B.$\uparrow$ & E.A.$(\degree)\downarrow$ & E.C.$\downarrow$ &  Fit Cyl $\downarrow$ & Fit Glob$\downarrow$   \\
    \midrule
    H.V. + $\mathbf{N_J}$  & &0.409 & 12.264 & 0.595 & 58.868 & 0.1248  & 0.1492 & 0.0683  \\
    D.P. & & - & - & - & 30.147 & 0.1426 & 1.4132 & 0.4257\\
    w/o $\mathcal{L}_\text{sketch} + \mathbf{N_J}$ & 0.00 &0.699 &  12.264 & \textbf{0.913} & 14.169 & 0.0729 & 0.0828 & 0.0330\\
    w/o $\mathcal{L}_\text{sketch}$ & & 0.699 & 8.747 & \textbf{0.913} & 9.795 & 0.0727 & 0.0826 & 0.0352\\
    Ours (\textbf{Point2Cyl}) & &\textbf{0.736} &\textbf{ 8.547 }& 0.911 & \textbf{8.137} & \textbf{0.0525} & \textbf{0.0704} & \textbf{0.0305}\\    
    \midrule
    H.V. + $\mathbf{N_J}$  &  & 0.395 & 13.894 & 0.594 & 58.907 & 0.1263 & 0.1662 & 0.0813  \\
    D.P. &  & - & - & - & 38.786 & 0.1392 & 2.1174 & 0.7815\\
    w/o $\mathcal{L}_\text{sketch} + \mathbf{N_J}$ & 0.01 & 0.625 & 13.894 & 0.869 & 16.544 & 0.0914 & 0.1100 & 0.0567\\
    w/o $\mathcal{L}_\text{sketch}$ &  & 0.631 & 11.901 & 0.869 & 14.542 & 0.0916 & 0.1079 & \textbf{0.0565}\\
    Ours (\textbf{Point2Cyl}) &  & \textbf{0.698} & \textbf{10.347} & \textbf{0.897} & \textbf{11.629} & \textbf{0.0760} & \textbf{0.0967} & 0.0606\\
    \midrule
    H.V. + $\mathbf{N_J}$ &  & 0.398 & 17.617 & 0.590 & 59.327 & 0.1267 & 0.1780 & 0.0902  \\
    D.P. &  & - & - & - & 45.907 & 0.1805 & 2.1277 & 1.6262\\
    w/o $\mathcal{L}_\text{sketch} + \mathbf{N_J}$ & 0.02 & 0.659 & 17.617 & 0.889 & 14.976 & 0.0820 & 0.1077 & 0.0600\\
    w/o $\mathcal{L}_\text{sketch}$ &  & 0.674 & 12.816 & 0.879 & \textbf{11.553} & 0.0796 & 0.1074 & \textbf{0.0595}\\
    Ours (\textbf{Point2Cyl}) &  & \textbf{0.679} & \textbf{12.691} & \textbf{0.891} & 11.934 & \textbf{0.0755} & \textbf{0.1056} & 0.0621\\
    \bottomrule
  \end{tabular}
    \caption{Experiment on noisy point clouds. $\sigma=0.00$ corresponds to the results reported in our main paper.}
    \label{tab_noise}
\end{table*}

\begin{table*}
    \centering
  \begin{tabular}{l|c|c|c|c|c|c|c}
    \toprule
    &Seg.$\uparrow$ & Norm.$(\degree)\downarrow$ & B.B.$\uparrow$ & E.A.$(\degree)\downarrow$ & E.C.$\downarrow$ &  Fit Cyl $\downarrow$ & Fit Glob$\downarrow$   \\
    \midrule
    w/ E.A. supervision  & 0.648 & 10.702 & 0.898 & 8.901 & 0.0735 & 0.1003 & 0.0490  \\
    w/ E.C. supervision   & 0.672 & 8.990 & 0.904 & 10.147 & 0.0733 & 0.0985 & 0.0556\\
    w/ E.A. $\&$ E.C. supervision & 0.633 & 10.508 & 0.897 & 9.166 & 0.0824 & 0.1069 & 0.0560 \\
    Ours (\textbf{Point2Cyl}) & \textbf{0.736} & \textbf{8.547} & \textbf{0.911} & \textbf{8.137} & \textbf{0.0525} & \textbf{0.0704} & \textbf{0.0305}\\
    \bottomrule
  \end{tabular}
    \caption{Experiment on adding additional direct supervision losses during training.}
    \label{tab_loss_evaluations}
\end{table*}

\begin{figure*}
    \centering
    \includegraphics[width=0.8\linewidth]{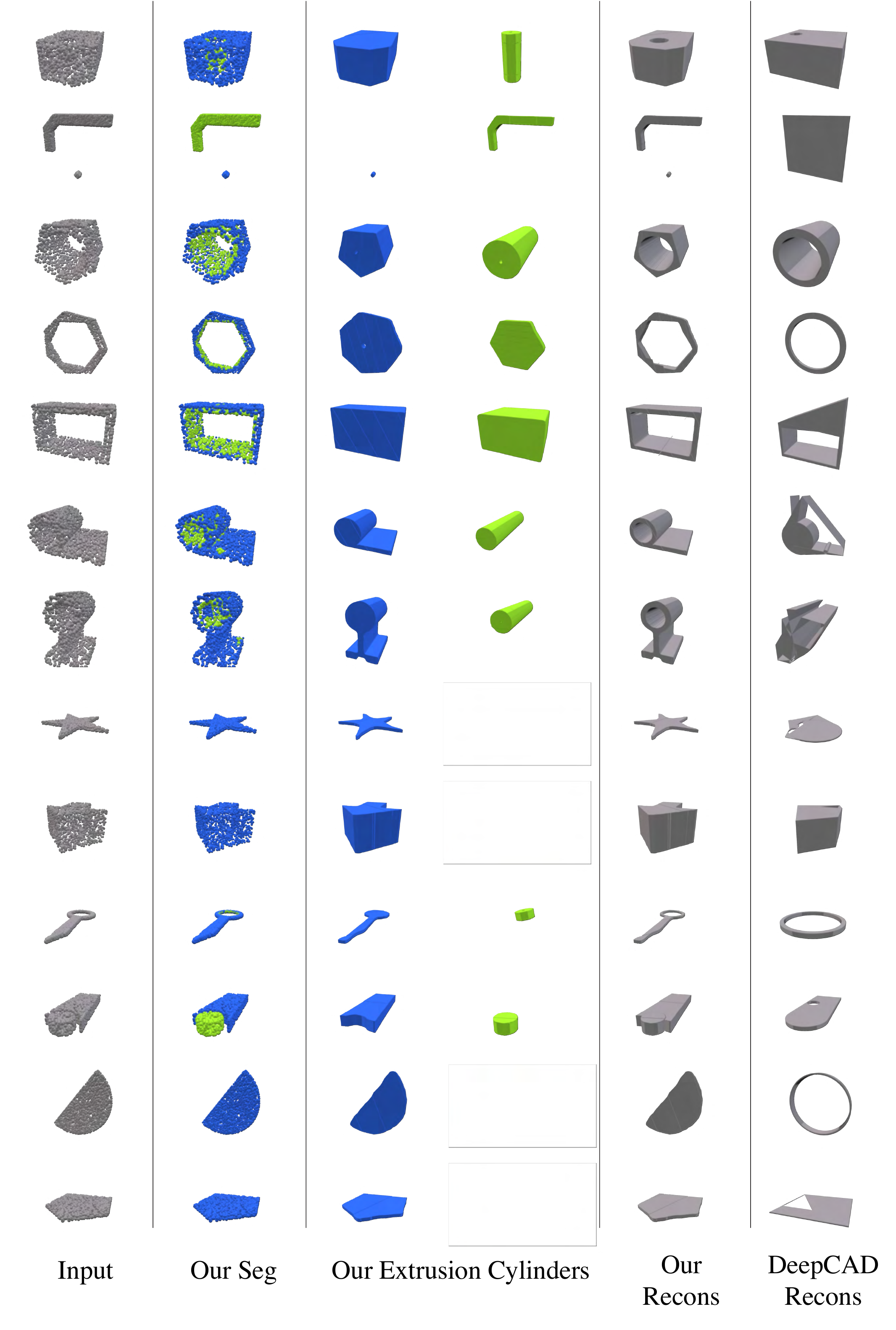}
    \caption{Additional qualitative examples from our Point2Cyl on the DeepCAD dataset. We also show comparisons with the conditional generation extension to DeepCAD~\cite{wu2021deepcad} and show that our approach result in output models that better match the input.}
    \label{fig:supp_quali}
\end{figure*}

\subsubsection{Ablation on Noisy Data}
We further experiment on adding noise to the input point clouds at both training and test time. We randomly perturb the points along the normal direction with a uniform noise between $[-\sigma, \sigma]$. Results are shown in \cref{tab_noise}, and we see that our Point2Cyl is able to tolerate noisy inputs without large performance drops. Experiments are on the Fusion Gallery dataset.

\subsubsection{Ablation on Additional Loss Functions}
We further ablate on adding additional loss functions. We experiment on adding additional loss functions that directly supervises for the extrusion axis (E.A.) and extrusion center (E.C.) based on the estimates from our parameter estimation module. Results are shown in  \cref{tab_loss_evaluations}. We see that adding these additional losses during training does not improve our performance. Experiments are on the Fusion Gallery dataset.

\subsubsection{Failure Cases and examples on Real Scans}
Fig.~\ref{fig:realscans_failurecases}-a shows some examples of failure cases. Our approach is challenged by thinly separated extrusion cylinders (1st) and thin extrusion cylinders with few barrel points (2nd) resulting in poor reconstruction. Fig.~\ref{fig:realscans_failurecases}-b shows examples of reconstructions from real scans of~\cite{spfn}.


\end{document}